\documentclass{article}
\PassOptionsToPackage{numbers, compress}{natbib}
\usepackage[main,final]{neurips_2026}

\usepackage[utf8]{inputenc} 
\usepackage[T1]{fontenc}    
\usepackage[colorlinks=true,linkcolor=blue,citecolor=blue,urlcolor=blue,filecolor=blue,menucolor=red]{hyperref}       
\usepackage{url}            
\usepackage{booktabs}       
\usepackage{array}          
\usepackage{amsfonts}       
\usepackage{nicefrac}       
\usepackage{microtype}      
\usepackage{xcolor}         
\usepackage{caption}
\usepackage{amsmath, amssymb, amsthm}
\usepackage{bm}
\usepackage{mathtools}
\usepackage{algorithm}
\usepackage{algpseudocode}

\usepackage{enumitem}
\setlist[itemize]{leftmargin=*,topsep=2pt,itemsep=0pt,parsep=0pt,partopsep=0pt}
\setlist[enumerate]{leftmargin=*,topsep=2pt,itemsep=0pt,parsep=0pt,partopsep=0pt}

\usepackage{graphicx}
\usepackage{adjustbox}
\usepackage{capt-of}
\graphicspath{{figures/}{figures/samples/}}

\newtheorem{theorem}{Theorem}
\newtheorem{lemma}{Lemma}
\newtheorem{proposition}{Proposition}
\newtheorem{corollary}{Corollary}

\newtheorem{assumption}{Assumption}
\theoremstyle{remark}
\newtheorem{remark}{Remark}
\theoremstyle{definition}

\def\A{\mathbf{A}}                
            \def\I{\mathbf{I}}

\def\cF{\mathcal{F}}                
            \def\cN{\mathcal{N}}    \def\cO{\mathcal{O}}

\newcommand{\RR}{\mathbb{R}}

\newcommand{\EE}{\mathbb{E}}
\newcommand{\PP}{\mathbb{P}}

\def\eps{\varepsilon}

\DeclareMathOperator{\KL}{KL}

\DeclareMathOperator{\Var}{Var}
\DeclareMathOperator{\Cov}{Cov}

\DeclareMathOperator*{\argmin}{arg\,min}

\newcommand{\dd}{\mathrm{d}}

\DeclareMathOperator{\osc}{osc}

\newcommand{\ours}{\textsc{BridgeTwist}}

\newcommand{\vthe}{v_\theta}                   
\newcommand{\sthe}{s_\theta}                   
\newcommand{\vstar}{v^\star}                   
\newcommand{\sstar}{s^\star}                   

\def\I{I}
\def\A{A}
\newcommand{\hmu}[1]{\hat{\mu}_{#1}}       
\newcommand{\mustar}[1]{\mu^\star_{#1}}        
\newcommand{\hh}{\hat{h}}                      
\newcommand{\Ly}{L_y}                           
\newcommand{\Pos}{\pi^\star}                   
\newcommand{\PosN}{\pi^{\star,(N)}}            
\newcommand{\TT}{\tau}                         

\title{\textsc{BridgeTwist}: Twisting Schr\"odinger Bridges for Training-Free Conditional Sampling}

\author{%
  Mengyu Li, Qianqian Qu, Jun S. Liu\thanks{To whom correspondence should be addressed.} \\
  Department of Statistics and Data Science, Tsinghua University, Beijing, China \\
  \texttt{mengyuli@tsinghua.edu.cn \quad quqq22@mails.tsinghua.edu.cn} \\
  \texttt{junsliu@tsinghua.edu.cn}
}

\begin{document}
\addtocontents{toc}{\protect\setcounter{tocdepth}{-1}}

\maketitle

\begin{abstract}
Generative models are often required to serve many downstream conditional tasks.
In the training-free setting, a pretrained unconditional model is reused as a prior and combined with a task-specific likelihood to sample from the corresponding posterior.
While such training-free samplers have been developed for diffusion and flow models, their counterpart for Schr\"odinger bridge (SB) models remains unexplored, despite the growing appeal of SBs as flexible stochastic transport models between general source and target distributions. We propose \textsc{BridgeTwist}, a training-free conditional sampler for pretrained SB generative models, formulated as a tempered twisted sequential Monte Carlo scheme.
Its twist is a closed-form plug-in surrogate, obtained by combining the velocity and score fields exposed by SB pretraining into an endpoint predictor and evaluating the likelihood there.
Since this surrogate is less reliable near the source, we temper the twist with a time-varying exponent, downweighting its influence at early steps.
We further introduce a bridge-shaped inference-time diffusion schedule that concentrates the proposal noise in the interior of the path.
Telescoping importance weights then cancel all intermediate twist factors, yielding an asymptotically exact sampler for the posterior.
Empirically, \textsc{BridgeTwist} outperforms training-free baselines on class-conditional and text-prompted image generation, and is competitive with specialized inverse problem solvers on inpainting, deblurring, and super-resolution.
\end{abstract}

\section{Introduction}\label{sec:intro}

\begin{figure}[t]
  \centering
  \includegraphics[width=\textwidth]{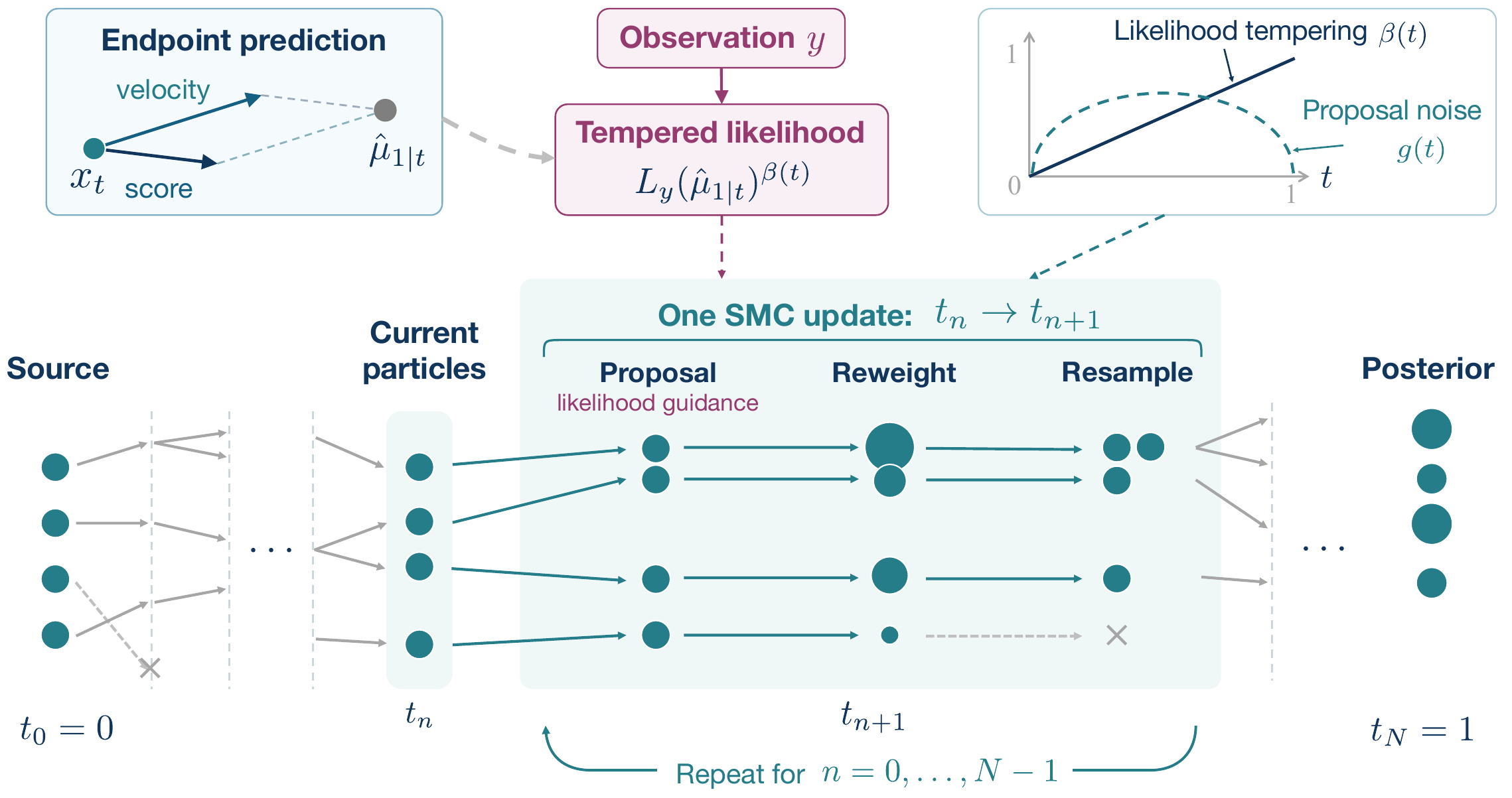}
  \caption{\ours{} overview. Schr\"odinger bridge's velocity and score predict endpoints for likelihood evaluation. The schedules $\beta(t)$ and $g(t)$ control likelihood tempering and proposal noise, respectively. Vertical position denotes state and circle area denotes weight. Resampling copies retain their states and are offset horizontally for visibility. The highlighted SMC update repeats along the time grid, and the terminal weighted particles approximate the target conditional distribution.}
  \label{fig:bridgetwist-overview}
\end{figure}

Conditional sampling draws from a generative model subject to specific constraints.
In the training-free regime, a pretrained unconditional model is treated as a prior $p_\theta(x)\approx p_\text{data}(x)$ and combined with a tractable likelihood $\Ly(x)=p(y|x)$.
Bayes' rule then defines the target posterior $p_\theta(x|y)\propto p_\theta(x)\Ly(x)$, allowing a single pretrained model to support many downstream conditional tasks without retraining.
Different choices of $\Ly$ recover classifier-guided sampling, inverse problems, text-guided generation, and protein motif scaffolding~\citep{wu2023practical,chung2023diffusion, dou2024diffusion, young2025tdsgenie, qu2026tftf}.

For score-based diffusion and flow matching priors, training-free conditional sampling has been developed along two main lines.
Gradient guidance and projection methods modify the unconditional sampling dynamics with likelihood-driven corrections at each step~\citep{chung2023diffusion, song2024fmps, kawar2022ddrm, song2023pseudoinverse, wang2025ocflow, martin2025pnpflow}; they are simple and effective, but generally introduce irreducible approximation error.
\emph{Sequential Monte Carlo} (SMC) samplers instead propagate a set of particles with reweighting and resampling, yielding consistent posterior samples and often stronger performance in various conditional tasks~\citep{trippe2023diffusion, wu2023practical, cardoso2024particle, dou2024diffusion, skreta2025fkcorrectors, singhal2025feynman, qu2026tftf}.
Across these training-free methods, an important component is a model-specific endpoint predictor from the current intermediate state; e.g., it is used to form lookahead twists in SMC samplers.

\emph{Schr\"odinger bridge} (SB) generative models~\citep{debortoli2021diffusion, shi2024diffusion, gushchin2024asbm, tong2024simulation} provide a recent alternative to diffusion and flow priors.
An SB connects a source $X_0\sim q_0$ to a target $X_1\sim q_1$; among all path laws with these source and target marginals, it selects the one that minimizes the Kullback--Leibler (KL) divergence to a prescribed reference law. 
This combination of a general source and stochastic transport is useful for image generation~\citep{debortoli2021diffusion, chen2022likelihood, abdelrahman2025toddlerdiffusion}, image restoration~\citep{liu2023i2sb, yi2025degradation}, cell trajectory inference~\citep{tong2024simulation, fournie2026cell}, and other tasks.
Existing SB-based conditional methods~\citep{shi2022cdsb, liu2023i2sb, huang2024sb, li2025bridgesr} train a task-specific network for each likelihood family.
Training-free conditional sampling that reuses a pretrained unconditional SB prior remains unexplored.

Under a Brownian reference with diffusion scale $\sigma>0$, an SB conditioned on its endpoints $(X_0,X_1)$ is a Brownian bridge. 
At each $t\in(0,1)$, we can therefore write $X_t=(1-t)X_0+tX_1+\varepsilon_t$, where $\varepsilon_t\sim\cN(0,\sigma^2t(1-t)\I)$ is independent of $(X_0,X_1)$.
For endpoint prediction, diffusion samplers use score-based denoising, and flow samplers use velocity-based extrapolation (Table~\ref{tab:novelty}).
For SBs, however, score-only denoising estimates the interpolation $(1-t)X_0+tX_1$ instead of $X_1$, while velocity-only extrapolation omits a score correction.
Our key observation is that the probability-flow velocity $\vstar_t$ and marginal score $\sstar_t$ are complementary, and combining these two fields gives the endpoint conditional mean: $\EE[X_1 | X_t=x]=x+(1-t)\vstar_t(x)+(\sigma^2/2)(1-t)\sstar_t(x)$ (Prop.~\ref{thm:inversion}).
Their neural approximations can be obtained through score and flow matching~\citep{tong2024simulation} or from learned forward and backward SB drifts~\citep{debortoli2021diffusion, shi2024diffusion}.

\begin{table}[!ht]
  \caption{Training-free conditional sampling landscape from an endpoint prediction view.}
  \label{tab:novelty}
  \centering
  \setlength{\tabcolsep}{3pt}
  \renewcommand{\arraystretch}{1.08}
  \begin{adjustbox}{max width=\textwidth}
  \begin{tabular}{@{}lccc@{}}
    \toprule
    Model class & What is $X_t$? & Endpoint signal & Representative samplers \\
    \midrule
    Diffusion &
    noisy observation of $X_1$ &
    score (Tweedie) &
    DPS~\citep{chung2023diffusion}, TDS~\citep{wu2023practical}, FPS~\citep{dou2024diffusion} \\
    Flow matching &
    deterministic interpolation of $(X_0,X_1)$ &
    velocity (Euler) &
    FMPS~\citep{song2024fmps}, TFTF~\citep{qu2026tftf} \\
    Schr\"odinger bridge &
    noisy interpolation of $(X_0,X_1)$ &
    \textbf{velocity $+$ score} (Prop.~\ref{thm:inversion}) &
    \textbf{\ours{} (this work)} \\
    \bottomrule
  \end{tabular}
  \end{adjustbox}
\end{table}

Using these pretrained fields, we propose \ours{}, a tempered twisted SMC sampler (Figure~\ref{fig:bridgetwist-overview}).
The ideal twist, i.e., the predictive likelihood $\EE[\Ly(X_1)| X_t]$, is unavailable analytically, and using pilot rollouts to estimate it \citep{lin2010bridge, lin2013lookahead} requires many network evaluations per particle and step.
We instead use a plug-in twist $\Ly(\hat{X}_1(X_t))$ to approximate, where $\hat{X}_1(X_t)$ is the velocity--score endpoint prediction.
This twist discards uncertainty in $X_1$ given $X_t$ and can be over-confident near the source.
We therefore temper its influence with a time-varying temperature coefficient.
In an isotropic linear-Gaussian model, the tempered plug-in twist can recover the exact twist (Remark~\ref{rem:beta-star}).

With velocity and score fields, we can vary the diffusion coefficient $g(t)$ at inference time while preserving the SB marginals (Prop.~\ref{prop:fokker-planck})~\citep{tong2024simulation,albergo2025stochastic}.
We use this flexibility to introduce a bridge-shaped schedule $g(t)\propto\sqrt{t(1-t)}$, following the time dependence of the Brownian bridge standard deviation.
This schedule concentrates noise in the middle of the sampling path.
It reduces noise near the source, where guidance is weak, and near the target, where additional noise can disrupt particles already aligned with the conditioning information.

\paragraph{Our contributions.} We believe that the following progresses are novel.
\begin{itemize}
  \item \textbf{Training-free SB conditioning.}
        \ours{} is, to our knowledge, the first training-free conditional sampler for pretrained Schr\"odinger bridges, supporting arbitrary differentiable likelihoods.
  \item \textbf{Tempered plug-in twist.}
        We construct a lookahead twist from the velocity--score endpoint predictor and temper its influence along the path, without changing the final target conditional distribution.
  \item \textbf{Noise schedule and consistency.}
        We introduce a bridge-shaped diffusion schedule and importance correction; Theorem~\ref{thm:exactness} bounds the $L_2$ error for $K$ particles by order $K^{-1/2}$ relative to the learned and discretized posterior.
  \item \textbf{Plug-and-play for diverse tasks.}
        \ours{} achieves leading performance on various image generation and restoration tasks without task-specific retraining.
\end{itemize}

\section{Background}\label{sec:background}

\paragraph{Notation.}
Random objects are denoted by uppercase letters (e.g., $X_t$), and their realizations by lowercase ones (e.g., $x_t$).
We write $\cN(\mu,\Sigma)$ for a Gaussian density and $\I$ for the identity matrix.
The time index is $t\in[0,1]$, and its discretization is $t_n$ for $n\in\{0,\ldots,N\}$ with $t_0=0$ and $t_N=1$.

\subsection{Schr\"odinger bridge generative models}\label{sec:setup-sb}

Let $\Omega:=C([0,1],\RR^d)$ be the space of continuous paths in $\RR^d$, and let $q_0$ and $q_1$ be the source (or noise) and target (or data) distributions.
Let $Q$ be a Brownian reference path measure under which $\dd X_t=\sigma\,\dd W_t$, where $\sigma>0$ and $W_t$ is the standard Brownian motion.
The \emph{Schr\"odinger bridge} problem~\citep{schrodinger1932theorie, leonard2014survey} seeks the path measure on $\Omega$ closest to $Q$ in Kullback--Leibler divergence subject to the prescribed source and target marginals,
\[
P^\star=\argmin_{P:\,p_0=q_0,\,p_1=q_1}\KL(P\|Q),
\]
where $p_t$ denotes the marginal of $P$ at time $t$.
The bridge $P^\star$ admits a forward stochastic differential equation (SDE),
\begin{equation}\label{eq:sb-sde}
\dd X_t=b^\star_t(X_t)\,\dd t+\sigma\,\dd W_t,\qquad X_0\sim q_0.
\end{equation}
The associated probability-flow ordinary differential equation (ODE)~\citep{song2021scorebased, tong2024simulation} has the same marginal distributions $p_t$ as Eq.~\eqref{eq:sb-sde}:
\begin{equation}\label{eq:sb-ode}
\dd X_t=\vstar_t(X_t)\,\dd t,\qquad X_0\sim q_0.
\end{equation}
The drift $b^\star$, velocity $\vstar$, and score $\sstar$ are linked by
\[
b^\star_t(x)=\vstar_t(x)+\tfrac{1}{2}\sigma^2\sstar_t(x),
\qquad \sstar_t(x)=\nabla\log p_t(x).
\]

\paragraph{Learning the Schr\"odinger bridge.}
Earlier iterative schemes~\citep{debortoli2021diffusion, shi2024diffusion} learn the bridge through repeated forward and backward SDE simulations and updates to drift networks. This repeated SDE simulation makes them computationally expensive.
Simulation-free score and flow matching~\citep{tong2024simulation}  circumvents the need to simulate the learned SDE during training.
Instead, it draws endpoint pairs $(X_0,X_1)$ from an entropic optimal transport (OT) coupling and, for each pair, samples an intermediate state $X_t$ from the corresponding Brownian bridge.
Then the conditional bridge velocity and score at $(t,X_t)$ serve as regression targets for $(\vthe,\sthe)$, neural approximations of $(\vstar,\sstar)$. Appendix~\ref{app:pretraining} details the pretraining parameterization used in our experiments.

Given $(\vthe,\sthe)$ and a diffusion schedule $g:[0,1]\to\RR_{\geq0}$, to sample from $q_1$, one draws $X_0\sim q_0$ and simulates
\begin{equation}\label{eq:sb-prior-sde}
\dd X_t=b_{\theta,g}(t,X_t)\,\dd t+g(t)\,\dd W_t,
\qquad
b_{\theta,g}(t,x)=\vthe(t,x)+\tfrac{1}{2}g(t)^2\sthe(t,x).
\end{equation}
The choice of $g$ is an inference-time degree of freedom; under exact fields, any continuous $g\ge0$ preserves the SB marginals $p_t$~\citep{tong2024simulation, albergo2025stochastic}.
We write $p_\theta$ for the endpoint marginal of $X_1$ under learned fields.

\subsection{Sequential Monte Carlo and twisted SMC}\label{sec:setup-smc}

Sequential Monte Carlo is a Bayesian computational framework that approximates a sequence of path-space distributions $\{\pi_n\}_{n=0}^N$ using weighted particles~\citep{liu1998sequential, liu2001monte, chopin2020introduction}.
On a grid $0=t_0<\cdots<t_N=1$, we write $x_{t_{0:n}}:=(x_{t_0},\ldots,x_{t_n})$ for the discretized trajectory up to $t_n$.
The framework specifies a sequence of unnormalized targets $\{\gamma_n\}_{n=0}^N$, with normalized densities $\pi_n(x_{t_{0:n}})\propto\gamma_n(x_{t_{0:n}})$, and proposal kernels $\{q_n\}_{n=0}^N$.
At step $n=0$, $K$ particles $\{X_{t_0}^{(k)}\}_{k=1}^K$ are drawn from $q_0$ and assigned normalized weights $W_0^{(k)}\propto\gamma_0(X_{t_0}^{(k)})/q_0(X_{t_0}^{(k)})$.
For each subsequent step $n=1,\ldots,N$, the algorithm
\begin{itemize}
  \item \emph{resamples} the previous-step particles in proportion to their weights;
  \item \emph{propagates} each resampled particle through $q_n(\cdot\mid x_{t_{n-1}})$; and
  \item \emph{reweights} by the incremental weight ratio $\omega_n(x_{t_{0:n}})=\gamma_n(x_{t_{0:n}})/[\gamma_{n-1}(x_{t_{0:n-1}})q_n(x_{t_n}| x_{t_{n-1}})]$.
\end{itemize}
We write $\pi_n(\dd x_{t_n})$ for the marginal of $\pi_n$ at time $t_n$.
Under standard regularity, the weighted empirical measure at $t_N$ converges weakly to $\pi_N(\dd x_{t_N})$ as $K\to\infty$~\citep{delmoral2004feynman, chopin2020introduction}.

For conditional sampling, the path target is the prior tilted by the terminal likelihood, $\Pi^\star(\dd x_{0:1})\propto P_{\theta,g}(\dd x_{0:1})\,\Ly(x_1)$, where $P_{\theta,g}$ is the law of Eq.~\eqref{eq:sb-prior-sde}.
A naive importance sampler that draws full trajectories from $P_{\theta,g}$ and reweights by $\Ly(X_1)$ can suffer severe weight degeneracy when the conditioning is sharp.
\emph{Twisted SMC}~\citep{whiteley2014twisted, chopin2020introduction} mitigates this by inserting intermediate twisting functions, also called lookahead functions.
Under the learned prior $P_{\theta,g}$, the ideal twist is the predictive likelihood
\begin{equation}\label{eq:ideal-twist}
h_t(x):=\EE[\Ly(X_1)\mid X_t=x],\qquad t\in[0,1].
\end{equation}
This ideal twist is intractable because it integrates over all future trajectories, and direct Monte Carlo estimation is prohibitively expensive.
Existing twisted SMC samplers therefore approximate $h_t$ with model-specific surrogates, using Tweedie's endpoint mean inside the likelihood for diffusion priors~\citep{wu2023practical}, and the Euler one-step endpoint for flow priors~\citep{qu2026tftf}.
Section~\ref{sec:method} develops the counterpart for Schr\"odinger bridges.

\section{Method}\label{sec:method}

We consider conditioning information $y$ supplied through a differentiable likelihood $\Ly(x)=p(y|x)$, possibly known up to a normalizing constant.
Conditional sampling aims to draw from the endpoint posterior $\Pos(x_1)\propto p_\theta(x_1)\Ly(x_1)$.
This endpoint posterior is the terminal marginal of the path posterior $\Pi^\star$ defined in Section~\ref{sec:setup-smc}.

Section~\ref{sec:method:inversion} constructs a tractable surrogate for the ideal twist $h_t(x)=\EE[\Ly(X_1)| X_t=x]$.
Section~\ref{sec:method:algorithm} develops the \ours{} sampler by combining temporal tempering, guided proposals, a bridge-shaped diffusion schedule, and importance correction.
Section~\ref{sec:method:theory} bounds the resulting finite-particle approximation error.

\subsection{Plug-in twist from the SB endpoint predictor}\label{sec:method:inversion}

We first construct a first-moment surrogate for the ideal twist $h_t$ in~\eqref{eq:ideal-twist} from the SB velocity and score and use it as the lookahead of a tempered twisted SMC sampler.

With a Brownian reference, the SB in Eq.~\eqref{eq:sb-sde} conditioned on $(X_0,X_1)$ is a Brownian bridge, that is, $X_t|(X_0,X_1)\sim\cN(M_t,\sigma^2t(1-t)\I)$, where $M_t=(1-t)X_0+tX_1$ is the linear interpolation between the endpoints.
Writing $D=X_1-X_0$ for the endpoint displacement gives $X_1=M_t+(1-t)D$.
The conditional means of $M_t$ and $D$ given $X_t$ admit closed-form expressions in terms of the velocity and score fields and combine to yield the endpoint conditional mean.

\begin{proposition}\label{thm:inversion}
Suppose the SB marginal density $p_t$ is continuously differentiable on $\RR^d$ for every $t\in(0,1)$.
Under a Brownian-reference SB with velocity $\vstar$ and score $\sstar$ as in Eq.~\eqref{eq:sb-ode}, for every $t\in(0,1)$ and $x\in\RR^d$,
\begin{align}
\EE[M_t\mid X_t=x]
&=x+\sigma^2t(1-t)\sstar_t(x),
\label{eq:mu-M-star}\\
\EE[D\mid X_t=x]
&=\vstar_t(x)+\tfrac{1-2t}{2}\sigma^2\sstar_t(x),
\label{eq:mu-D-star}\\
\mustar{1\mid t}(x):=\EE[X_1\mid X_t=x]
&=x+(1-t)\vstar_t(x)+\tfrac{\sigma^2}{2}(1-t)\sstar_t(x).
\label{eq:mu-1-star}
\end{align}
\end{proposition}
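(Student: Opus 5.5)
The plan is to write the SB marginal as a mixture of Brownian-bridge Gaussians over the endpoint coupling and to read off each identity from this representation. Let $\pi^\star(\dd x_0,\dd x_1)$ denote the $(X_0,X_1)$-marginal of $P^\star$. For $t\in(0,1)$,
\[
p_t(x)=\int \cN\bigl(x;\,(1-t)x_0+tx_1,\ \sigma^2t(1-t)\I\bigr)\,\pi^\star(\dd x_0,\dd x_1).
\]
The key inputs are two standard facts about the Brownian-reference SB. The first is Tweedie's formula applied to this Gaussian mixture. The second is the identification of the probability-flow velocity as a conditional expectation of the conditional bridge velocity. This is the same regression target used in simulation-free score and flow matching \citep{tong2024simulation}.

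\textbf{Step 1 (Eq.~\eqref{eq:mu-M-star}).} Differentiate the mixture under the integral, which is justified by the Gaussian kernel and the assumed $C^1$ regularity of $p_t$. Since $\nabla_x\cN(x;M,s^2\I)=-(x-M)s^{-2}\cN(x;M,s^2\I)$, dividing by $p_t(x)$ gives
\[
\sstar_t(x)=-\frac{x-\EE[M_t\mid X_t=x]}{\sigma^2t(1-t)}.
\]
Rearranging yields \eqref{eq:mu-M-star}.

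\textbf{Step 2 (Eq.~\eqref{eq:mu-D-star}).} For fixed endpoints the bridge satisfies $X_t=M_t+\sigma\sqrt{t(1-t)}\,Z$. Its conditional probability-flow velocity is
\[
u_t(x\mid x_0,x_1)=(x_1-x_0)+\frac{1-2t}{2t(1-t)}\bigl(x-M_t\bigr),
\]
which is the time derivative of the mean plus $(\dot s_t/s_t)(x-M_t)$, with $s_t=\sigma\sqrt{t(1-t)}$. The marginal velocity $\vstar_t(x)$ is the conditional expectation of $u_t$ given $X_t=x$. This follows because $p_t u$-averaging preserves the continuity equation, and the PF-ODE of Eq.~\eqref{eq:sb-ode} is exactly the flow with velocity $b^\star-\tfrac12\sigma^2\sstar$ satisfying the same continuity equation. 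Taking conditional expectations gives
\[
\vstar_t(x)=\EE[D\mid X_t=x]+\frac{1-2t}{2t(1-t)}\bigl(x-\EE[M_t\mid X_t=x]\bigr).
\]
Substituting Step 1 replaces the last term by $-\tfrac{1-2t}{2}\sigma^2\sstar_t(x)$, and rearranging gives \eqref{eq:mu-D-star}.

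\textbf{Step 3 (Eq.~\eqref{eq:mu-1-star}).} Use $X_1=M_t+(1-t)D$ and linearity of conditional expectation:
\[
x+\sigma^2t(1-t)\sstar+(1-t)\vstar+\tfrac{(1-t)(1-2t)}{2}\sigma^2\sstar .
\]
The score coefficient is $\sigma^2(1-t)\bigl[t+\tfrac{1-2t}{2}\bigr]=\tfrac{\sigma^2}{2}(1-t)$, which is the claimed formula.

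\textbf{Main obstacle.} The delicate point is Step 2, namely justifying that the paper's $\vstar$ from Eq.~\eqref{eq:sb-ode} coincides with the averaged conditional velocity. Both velocity fields generate $p_t$, but a continuity equation does not determine the velocity uniquely. I would settle this through the Markov/reciprocal structure of $P^\star$. Its drift satisfies $b^\star_t(x)=\EE\bigl[\sigma^2\nabla_x\log Q(X_1\mid X_t=x)\mid X_t=x\bigr]$, since the SB is a mixture of bridges that is itself Markov. Computing $\nabla_x\log\cN(x_1;x,\sigma^2(1-t)\I)=(x_1-x)/(1-t)$ then gives
\[
b^\star_t(x)=\frac{\EE[X_1\mid X_t=x]-x}{1-t}.
\]
This is in fact \eqref{eq:mu-1-star} directly, via $b^\star=\vstar+\tfrac12\sigma^2\sstar$. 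I would therefore present this drift identity as the primary route to \eqref{eq:mu-1-star}. Equation~\eqref{eq:mu-D-star} then follows by subtraction, $\EE[D\mid X_t]=\bigl(\mu^\star_{1\mid t}-\EE[M_t\mid X_t]\bigr)/(1-t)$, which removes any reliance on the velocity-averaging argument.
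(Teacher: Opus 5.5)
Your final route is exactly the paper's proof: you get \eqref{eq:mu-1-star} from the mixture-of-bridges drift $b^\star_t(x)=(\EE[X_1\mid X_t=x]-x)/(1-t)$ together with $b^\star=\vstar+\tfrac12\sigma^2\sstar$, \eqref{eq:mu-M-star} from Tweedie on the Gaussian convolution over $M_t$, and \eqref{eq:mu-D-star} by subtraction, and you are right to drop the velocity-averaging justification, since a shared continuity equation does not identify $\vstar$. One small slip: $\nabla_x\log\cN(x_1;x,\sigma^2(1-t)\I)=(x_1-x)/(\sigma^2(1-t))$, and it is the $\sigma^2$ prefactor in your drift identity that cancels this $\sigma^2$, so your formula for $b^\star_t$ stands.
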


Proposition~\ref{thm:inversion} serves two purposes:
(i) Eq.~\eqref{eq:mu-1-star} provides a closed-form endpoint predictor, that is, a linear extrapolation $\mustar{1| t}(x)=x+(1-t)b^\star_t(x)$ along the SB forward drift $b^\star$.
(ii) More importantly, identities~\eqref{eq:mu-M-star}--\eqref{eq:mu-D-star} explain why neither single channel suffices on its own: a score-only Tweedie predictor in~\eqref{eq:mu-M-star} recovers the interpolation instead of the endpoint, while a velocity-only Euler predictor $x+(1-t)\vstar_t(x)$ omits the score correction in~\eqref{eq:mu-1-star}; Section~\ref{sec:exp:ablation} ablates both.
The stochastic interpolant framework of \citet{albergo2025stochastic} provides velocity and score relations that leave the endpoint mean implicit.
Proposition~\ref{thm:inversion} explicitly recovers this mean for the Brownian bridge interpolant.
The proof is in Appendix~\ref{app:proof-inversion}.

With pretrained neural fields $\vthe$ and $\sthe$, we obtain a practical endpoint predictor
\begin{equation}\label{eq:hmu1t}
\hmu{1\mid t}(x)
:=x+(1-t)\vthe(t,x)+\tfrac{\sigma^2}{2}(1-t)\sthe(t,x),
\end{equation}
and use it to define the twist surrogate 
\[
\hh_t(x):=\Ly(\hmu{1\mid t}(x))\quad\text{for}\quad t\in(0,1),\qquad \hh_1(x)=\Ly(x).
\]

\subsection{The \ours{} sampler}\label{sec:method:algorithm}

Fix a grid $0=t_0<t_1<\cdots<t_N=1$ with $\Delta_n=t_n-t_{n-1}$, and use the Euler--Maruyama prior kernel
\begin{equation}\label{eq:prior-kernel}
M_n(x_{t_{n-1}},\dd x_{t_n})
:=\cN\big(x_{t_n};\,x_{t_{n-1}}+b_{\theta,g}(t_{n-1},x_{t_{n-1}})\Delta_n,\,g(t_{n-1})^2\Delta_n\,\I\big)\dd x_{t_n},
\end{equation}
with $b_{\theta,g}$ and $g$ defined in Eq.~\eqref{eq:sb-prior-sde}.  The corresponding path law is $P_{\theta,g}^{(n)}(\dd x_{t_{0:n}}):=q_0(\dd x_{t_0})\prod_{i=1}^n M_i(x_{t_{i-1}},\dd x_{t_i})$ for $n\le N$.

\paragraph{Tempered intermediate targets.}
Even with an exact conditional mean, the plug-in twist discards the conditional spread of $X_1 | X_t$, so in general
\[
\Ly\big(\EE[X_1\mid X_t=x]\big)
\neq \EE[\Ly(X_1)\mid X_t=x]=h_t(x).
\]
Since this spread is large at early times, evaluating the likelihood at a single endpoint prediction can make the twist over-confident (Remark~\ref{rem:beta-star}).
Therefore, we temper the plug-in twist, in the spirit of annealed importance sampling~\citep{neal2001annealed, delmoral2006sequential}, by the exponent $\beta(t)=t^a$ with $a\ge1$, so that $\beta(0)=0$ and $\beta(1)=1$.
Letting $\beta_n=\beta(t_n)$ and $\hh_n=\hh_{t_n}$, the intermediate (unnormalized) path targets are 
\[
\gamma_n^\beta(\dd x_{t_{0:n}}):=P_{\theta,g}^{(n)}(\dd x_{t_{0:n}})\,\hh_n(x_{t_n})^{\beta_n},\qquad 
\pi_n^\beta=\gamma_n^\beta/Z_n^\beta.
\] 
At $n=N$, $\beta_N=1$ and $\hh_N=\Ly$, so $\gamma_N^\beta(\dd x_{t_{0:N}})=P_{\theta,g}^{(N)}(\dd x_{t_{0:N}})\Ly(x_{t_N})$.
Let $p_\theta^{(N)}(\dd x_{t_N})$ denote the endpoint marginal of the discretized prior path law $P_{\theta,g}^{(N)}$.
Normalizing $\gamma_N^\beta$ and integrating out $x_{t_{0:N-1}}$ then gives the desired endpoint posterior $\pi_N^\beta(\dd x_{t_N})=\PosN(\dd x_{t_N})\propto p_\theta^{(N)}(\dd x_{t_N})\Ly(x_{t_N})$.

\paragraph{Guided proposal and telescoping weights.}
The continuous-time Doob $h$-transform adds $g(t)^2\nabla_x\log h_t(x)$ to the prior drift~\citep{whiteley2014twisted, wu2023practical, chopin2020introduction}.
Replacing the ideal twist $h$ by the tempered plug-in twist $\hh^\beta$, we obtain the guided proposal $q_n=\cN(m_{q,n},\Sigma_{q,n})$ with covariance $\Sigma_{q,n}=(g(t_{n-1})^2\Delta_n+\varepsilon)\I$, $\varepsilon>0$, and mean
\begin{equation}\label{eq:guided-proposal}
m_{q,n}(x_{t_{n-1}})
=x_{t_{n-1}}+\big[b_{\theta,g}(t_{n-1},x_{t_{n-1}})+w_g\,g(t_{n-1})^2\beta_{n-1}\nabla_x\log\hh_{n-1}(x_{t_{n-1}})\big]\Delta_n,
\end{equation}
where $w_g\geq 0$ is a guidance strength, and the inflation $\varepsilon\I$ makes the density ratio $M_n/q_n$ uniformly bounded in $x_{t_n}$.
The incremental weight
\begin{equation}\label{eq:incremental-weight}
\omega_n(x_{t_{n-1}},x_{t_n})
:=\frac{M_n(x_{t_{n-1}},x_{t_n})\,\hh_n(x_{t_n})^{\beta_n}}{q_n(x_{t_n}\mid x_{t_{n-1}},y)\,\hh_{n-1}(x_{t_{n-1}})^{\beta_{n-1}}}
\end{equation}
corrects the guided proposal toward $\pi_n^\beta$, and multiplying Eq.~\eqref{eq:incremental-weight} over $n$ gives the telescoping identity (Eq.~\eqref{eq:app-telescope} in Appendix~\ref{app:proof-exactness}).
All intermediate twist factors thus cancel in the telescoping product.
Therefore, although tempering changes both the intermediate targets and the proposal drift, the endpoint target is unchanged.

\begin{remark}[Tempering as variance matching]\label{rem:beta-star}
Suppose $X_1| X_t=x\sim\cN(\mustar{1| t}(x),\sigma_{1| t}^2\I)$ and $\Ly(x)=\cN(y;x,\sigma_y^2\I)$.
Then $h_t(x)=\cN(y;\mustar{1| t}(x),(\sigma_y^2+\sigma_{1| t}^2)\I)$.
For an exact conditional mean $\hmu{1| t}=\mustar{1| t}$, variance matching gives
\[
\beta^\star(t)=\frac{\sigma_y^2}{\sigma_y^2+\sigma_{1\mid t}^2},\qquad
\hh_t(x)^{\beta^\star(t)}\propto h_t(x).
\]
With $w_g=1$ and $\beta(t)=\beta^\star(t)$, the guidance in~\eqref{eq:guided-proposal} recovers the exact $h$-transform drift correction.
As $\sigma_{1| t}^2$ decreases toward the endpoint, $\beta^\star(t)$ increases to one, motivating $\beta(t)=t^a$ as a monotone surrogate that requires no variance estimate.
Appendix~\ref{app:proof-beta-star} gives the proof and the extension to observations of a subset of coordinates.
\end{remark}

\paragraph{Bridge-shaped diffusion schedule.}
At inference time, $g$ is flexible (Section~\ref{sec:setup-sb}).
A constant $g(t)\equiv\sigma$ matches the Brownian reference but allocates a uniform proposal variance $\sigma^2\Delta_n$ across $n$.
This is mismatched with the conditional bridge variance $\sigma^2t(1-t)$, which instead vanishes at both endpoints, and the excess proposal noise can increase the variance of importance weights near $t\in\{0,1\}$.
We therefore introduce a bridge-shaped schedule $g(t)=c_g\sigma\sqrt{t(1-t)+\delta_g}$, where $c_g>0$ is an inference diffusion scale and the small $\delta_g>0$ keeps $g$ positive at $t\in\{0,1\}$.

\begin{remark}[Interaction between tempering and diffusion]
The exponent $\beta(t)$ sets how much the sampler trusts the plug-in lookahead, while $g(t)$ allocates the proposal noise along the path.
Small $g(t)$ is intended to limit unguided perturbations near $t=0$, where the source already supplies particle diversity, and to avoid disturbing particles aligned with the likelihood near $t=1$.
Larger interior noise supports exploration.
Ignoring the small floor $\delta_g$, the two schedules combine in Eq.~\eqref{eq:guided-proposal} through the scalar guidance coefficient $g(t)^2\beta(t)\propto t^{a+1}(1-t)$, which peaks at $t=(a+1)/(a+2)$, or $t=2/3$ for the default $a=1$.
\end{remark}

\paragraph{Algorithm and complexity.}
Algorithm~\ref{alg:bridgetwist} summarizes the proposed \ours{} sampler, with stratified resampling triggered when the effective sample size (ESS) falls below $\rho K$.
Each step requires one forward evaluation of $(\vthe,\sthe)$ per particle and one backward pass through the endpoint predictor and the likelihood, so the total cost is $\cO(KN)$ network evaluations and $\cO(KN)$ likelihood gradients, the same order as gradient-guided samplers~\citep{chung2023diffusion, song2024fmps} and twisted SMC samplers~\citep{wu2023practical, qu2026tftf}.

\begin{algorithm}[!t]
\caption{\ours{}: tempered twisted SMC for Schr\"odinger bridges}
\label{alg:bridgetwist}
\small
\begin{algorithmic}[1]
\Require Pretrained fields $(\vthe,\sthe)$; source $q_0$; likelihood $\Ly$; grid $t_{0:N}$; diffusion schedule $g$; guidance strength $w_g$; tempering exponent $a$; number of particles $K$; ESS threshold $\rho\in(0,1]$.
\Ensure Weighted endpoint particles $\{(X_{t_N}^{(k)},W_N^{(k)})\}_{k=1}^K$ approximating $\PosN$.
\State Set $\beta_n=t_n^a$ and $b_{\theta,g}(t,x)=\vthe(t,x)+\tfrac{1}{2}g(t)^2\sthe(t,x)$.
\State Sample $X_{t_0}^{(k)}\sim q_0$ and set $\log W_0^{(k)}=\beta_0\log\hh_0(X_{t_0}^{(k)})$ for $k=1,\ldots,K$; normalize $\{W_0^{(k)}\}_{k=1}^K$.
\For{$n=1,\ldots,N$}
    \State If $\mathrm{ESS}_{n-1}:=1/\sum_k (W_{n-1}^{(k)})^2<\rho K$, resample ancestors $A_{n-1}^{(1:K)}$ and reset $\log W_{n-1}^{(k)}=-\log K$; otherwise set $A_{n-1}^{(k)}=k$. \Comment{resampling}
    \For{$k=1,\ldots,K$} \Comment{propagation and reweighting}
        \State Let $\bar X_{t_{n-1}}^{(k)}=X_{t_{n-1}}^{(A_{n-1}^{(k)})}$ and evaluate $\nabla_x\log\hh_{n-1}(\bar X_{t_{n-1}}^{(k)})$.
        \State Sample $X_{t_n}^{(k)}\sim q_n(\cdot|\bar X_{t_{n-1}}^{(k)},y)$ from Eq.~\eqref{eq:guided-proposal}.
        \State Update $\log W_n^{(k)}=\log W_{n-1}^{(k)}+\log\omega_n(\bar X_{t_{n-1}}^{(k)},X_{t_n}^{(k)})$ by Eq.~\eqref{eq:incremental-weight}.
    \EndFor
    \State Normalize $\{W_n^{(k)}\}_{k=1}^K$.
\EndFor
\State \Return $\{(X_{t_N}^{(k)},W_N^{(k)})\}_{k=1}^K$.
\end{algorithmic}
\end{algorithm}

\subsection{Finite-particle accuracy}\label{sec:method:theory}

Let $h_n(x):=\EE[\Ly(X_1)|X_{t_n}=x]$ and $P^y_{n,N}(x,\dd z):=\PP(X_1\in\dd z|X_{t_n}=x,y)$ denote the exact twist and the posterior transition kernel from step $n$ to step $N$ under the discretized prior.
Write $\osc(f):=\sup f-\inf f$ and $\varrho(P):=\sup_{\osc(f)\le1}\osc(Pf)\in[0,1]$ for the oscillation of $f$ and the contraction coefficient of a Markov kernel $P$, respectively.
The norm $\|\cdot\|_{L_2}$ denotes the root mean square over the sampler's randomness.

\begin{theorem}\label{thm:exactness}
Under the regularity conditions stated in Appendix~\ref{app:proof-exactness}, for every bounded measurable $\varphi:\RR^d\to\RR$ and every $K\ge1$, the weighted empirical measure produced by Algorithm~\ref{alg:bridgetwist} with stratified resampling satisfies
\begin{equation}\label{eq:finite-k}
\Big\|\sum_{k=1}^K W_N^{(k)}\varphi(X_{t_N}^{(k)})-\EE_{\PosN}[\varphi]\Big\|_{L_2}
\le\frac{C_N}{\sqrt K}\,\osc(\varphi),
\quad
C_N=2\sum_{n=0}^N\alpha_n\,R_{n,N}^2\,\varrho(P^y_{n,N}),
\end{equation}
where $\alpha_n<\infty$ is independent of $K$ and controls the one-step sampling error, and $R_{n,N}:=\sup_x\{h_n(x)/\hh_n(x)^{\beta_n}\}/\inf_x\{h_n(x)/\hh_n(x)^{\beta_n}\}$ measures the relative twist mismatch.
\end{theorem}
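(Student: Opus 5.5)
We will prove the bound by casting Algorithm~\ref{alg:bridgetwist} as a standard Feynman--Kac particle model and applying a telescoping (Doob martingale) decomposition of the error across time steps, in the style of \citet{delmoral2004feynman} and \citet{chopin2020introduction}.

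\emph{Step 1: identify the model.} With $G_0=\hh_0^{\beta_0}$ and $G_n=\omega_n$ from Eq.~\eqref{eq:incremental-weight}, and Markov kernels $q_n$, the telescoping identity (Eq.~\eqref{eq:app-telescope}) gives
\[
q_0(x_{t_0})G_0\prod_{n=1}^N q_nG_n=\gamma_N^\beta(x_{t_{0:N}}).
\]
Hence the $n$-th predictive/updated Feynman--Kac measures are exactly $\pi_n^\beta$, whose terminal marginal is $\PosN$. The regularity conditions are assumed to give $0<\inf G_n\le\sup G_n<\infty$. The inflation $\varepsilon$ makes $M_n/q_n$ bounded, and bounded-away-from-zero ratios $h_n/\hh_n^{\beta_n}$ supply the rest; these are the constants absorbed into $\alpha_n$.

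\emph{Step 2: telescoping error decomposition.} Write $\eta_n^K$ for the weighted empirical measure after step $n$, and let $\Phi_{n,N}$ denote the nonlinear semigroup that maps the time-$n$ flow to $\PosN$. Then
\[
\eta_N^K(\varphi)-\PosN(\varphi)=\sum_{n=0}^N\big[\Phi_{n,N}(\eta_n^K)-\Phi_{n,N}(\Phi_{n}(\eta_{n-1}^K))\big](\varphi).
\]
Each summand is a local sampling error; under stratified resampling, and conditionally on $\mathcal F_{n-1}$, it has $L_2$ norm at most $\alpha_n^{1/2}K^{-1/2}$ times an oscillation. Since the summands are martingale increments, the triangle inequality (or Burkholder) reduces the problem to bounding each propagated term.

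\emph{Step 3: linearize $\Phi_{n,N}$ and insert the exact twist.} The key observation is that $\pi_n^\beta$ reweighted by $h_n/\hh_n^{\beta_n}$ equals the exact posterior marginal at $t_n$, and from there the law of $X_1$ is given by $P^y_{n,N}$. So $\Phi_{n,N}(\mu)(\varphi)=\mu(u_n P^y_{n,N}\varphi)/\mu(u_n)$ with $u_n=h_n/\hh_n^{\beta_n}$. The standard first-order estimate
\[
|\Phi(\mu)(\varphi)-\Phi(\nu)(\varphi)|\le 2\frac{\sup u_n}{\inf u_n}\,\osc(P^y_{n,N}\varphi)\,\big|(\mu-\nu)(\bar f)\big|
\]
(with centered $\bar f$ of unit oscillation) gives one factor $R_{n,N}$ and $\osc(P^y_{n,N}\varphi)\le\varrho(P^y_{n,N})\osc(\varphi)$. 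A second factor $R_{n,N}$ arises when the ratio of normalizers is controlled in $L_2$ rather than almost surely, i.e.\ when bounding the random denominator $\mu(u_n)/\nu(u_n)$. Summing over $n$ yields $C_N$ as stated.

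\emph{Main obstacle.} Step 3 is the delicate one. It requires showing that the twisted intermediate measures compose with $P^y_{n,N}$ exactly, which relies on $h_n$ being the $h$-function of the discretized prior so that the Doob transform is exact. It also requires tracking that the random normalization yields $R_{n,N}^2$ and not a worse power. I would first prove the deterministic identity $\Phi_{n,N}(\mu)=\Psi_{u_n}(\mu)P^y_{n,N}$ and then apply the Del Moral Lipschitz lemma for Boltzmann--Gibbs transformations.
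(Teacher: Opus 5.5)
Your architecture matches the paper's proof. The telescoping identity fixes the terminal target $\PosN$, and the error is decomposed as $\sum_{p=0}^N[\Phi_{p,N}(\eta_p^K)-\Phi_{p,N}(\tilde\eta_p^K)](\varphi)$ with $\tilde\eta_p^K=\Phi_p(\eta_{p-1}^K)$. The continuation factors as $\Phi_{p,N}(\mu)(\varphi)=\mu(u_{p,N}P^y_{p,N}\varphi)/\mu(u_{p,N})$ with $u_{p,N}=h_p/\hh_p^{\beta_p}$. A Boltzmann--Gibbs estimate is then applied to an $\cF_{p-1}^K$-measurable test function, and the terms are summed by the triangle inequality.

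The gap is the one-step bound, which carries all of the $K^{-1/2}$ content. You only assert it, and the premise you give for it, $\inf G_n>0$, is false under the paper's assumptions. The proposal $q_n$ has covariance $(g(t_{n-1})^2\Delta_n+\varepsilon)I$, strictly larger than that of $M_n$, so $M_n/q_n\to0$ as $\|x_{t_n}\|\to\infty$. Hence $\inf\omega_n=0$, and any Del Moral-type constant involving $\sup G_n/\inf G_n$ is infinite.

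What holds, and suffices, is the following:
\begin{itemize}
\item an upper bound $\omega_n\le\bar\omega_n$, which is where $\varepsilon$ and the bounded log-twist gradient enter;
\item a lower bound on the \emph{conditional mean} weight, $Q_n\mathbf 1(x)=\hh_{n-1}(x)^{-\beta_{n-1}}\int M_n(x,\dd x')\,\hh_n(x')^{\beta_n}\ge\ell_n/u_{n-1}$.
\end{itemize}
Together these bound the conditional second moment of $\omega_n$ by $\kappa_n=\bar\omega_nu_{n-1}/\ell_n$ times the squared conditional mean. A self-normalized ratio-estimator bound for independent, non-identically distributed pairs then gives the local error.

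You also need to handle the ESS-triggered resampling, which your sketch never mentions. When resampling is skipped, the local error is $O(K^{-1/2})$ only because $\sum_i(W_{n-1}^{(i)})^2\le1/(\rho K)$; this is how $\rho$ enters $\alpha_n$. When resampling occurs, stratified resampling gives conditionally independent pairs with expected ancestor counts $KW_{n-1}^{(j)}$. The ratio estimator then targets exactly $\Phi_n(\eta_{n-1}^K)(f)$.

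Your account of the second factor $R_{n,N}$ is also off, though harmlessly. Your own first-order estimate already controls the random normalizer almost surely. Set $\mu=\eta_p^K$, $\nu=\tilde\eta_p^K$, $B=\Phi_{p,N}(\nu)(\varphi)$ and $g=u_{p,N}(P^y_{p,N}\varphi-B)$. Then:
\begin{itemize}
\item $\Phi_{p,N}(\mu)(\varphi)-B=(\mu-\nu)(g)/\mu(u_{p,N})$;
\item $\mu(u_{p,N})\ge\inf u_{p,N}$;
\item $\osc(g)\le2\sup u_{p,N}\,\osc(P^y_{p,N}\varphi)$.
\end{itemize}
Completing your argument therefore gives the constant $2\sum_p\alpha_pR_{p,N}\varrho(P^y_{p,N})$. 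This implies the stated bound because $R_{p,N}\ge1$. The paper obtains $R_{p,N}^2$ only because it splits the same quantity into two almost-sure bounds, $\nu(u_{p,N})/\mu(u_{p,N})\le R_{p,N}$ and $u_{p,N}(x)/\nu(u_{p,N})\le R_{p,N}$. No $L_2$ control of normalizers is involved.

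Two smaller points. The propagated summands are not martingale increments, since the nonlinearity of $\Phi_{p,N}$ and the self-normalization bias them. So Burkholder does not apply as stated, but the triangle inequality, which the paper uses, is enough. Also keep the normalization of $\alpha_n$ consistent: you write $\alpha_n^{1/2}$ in your Step~2 but $\alpha_n$ in $C_N$.
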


The endpoint predictor and tempering directly influence $R_{n,N}$, which equals one when $\hh_n^{\beta_n}\propto h_n$; Remark~\ref{rem:beta-star} gives the corresponding Gaussian example.
The local error constant $\alpha_n$ reflects the guided proposal, diffusion schedule, and resampling threshold $\rho$, while $\varrho(P^y_{n,N})$ is the contraction coefficient of $P^y_{n,N}$.
For a fixed grid, Eq.~\eqref{eq:finite-k} gives $L_2$ convergence of the weighted estimates to expectations under $\PosN$, the posterior induced by the learned and discretized prior, at the rate of $K^{-1/2}$.

\section{Experiments}\label{sec:experiments}

We compare \ours{} with flow matching posterior sampling (FMPS)~\citep{song2024fmps}, training-free targeted flow (TFTF)~\citep{qu2026tftf}, twisted diffusion sampler (TDS)~\citep{wu2023practical}, diffusion posterior sampling~\citep{chung2023diffusion} adapted to the SB prior (DPS-SB), and terminal importance sampling (IS).
MNIST provides a sanity check~\citep{lecun1998gradient}; CIFAR-10 and $256\times256$ CelebA-HQ provide the main image benchmarks~\citep{krizhevsky2009learning,karras2018progressive}.
Pretraining and implementation details are in Appendices~\ref{app:pretraining}--\ref{app:implementation}; additional results are in Appendix~\ref{app:additional}.
The code is publicly available at \url{https://github.com/Mengyu8042/BridgeTwist}.

\subsection{Toy example}\label{sec:exp:toy}

We first evaluate \ours{} on a two-dimensional toy example~\citep{qu2026tftf}.
The target distribution is the Gaussian mixture $q_1=0.75\,\cN((-\sqrt{2},-\sqrt{2}),\I/4)+0.25\,\cN((\sqrt{2},\sqrt{2}),\I/4)$, the source is $q_0=\cN(0,\I)$, and the Brownian reference has $\sigma=1$.
We condition on $y=(0,0)$ through the Gaussian likelihood $\Ly(x_1)=\cN(y;x_1,\I/4)$.
In this setting, the target posterior is available analytically as $\Pos(x_1)=0.75\,\cN(x_1;(-\sqrt{2}/2,-\sqrt{2}/2),\I/8)+0.25\,\cN(x_1;(\sqrt{2}/2,\sqrt{2}/2),\I/8)$.
All velocity and score fields are computed in closed form, so this experiment isolates the conditional sampler from neural network approximation errors (Appendix~\ref{app:toy-benchmark}).

We use $N=100$ discretization steps and $K\in\{100,200,500,1000,2000\}$ particles, and report the maximum mean discrepancy (MMD) to the analytic posterior over $50$ repetitions.
The left panel of Figure~\ref{fig:toy} shows that \ours{} achieves the lowest MMD.
Its empirical MMD$^2$ decays approximately as $K^{-1}$, with a fitted log-log slope of $-1.01$.
SMC-based methods improve as $K$ increases, with \ours{} outperforming both flow-based TFTF and diffusion-based TDS.
FMPS and DPS-SB plateau at higher errors because they use approximate guidance without importance weight correction, leaving bias that persists as $K$ increases.
The right panel indicates that \ours{} closely matches the two posterior modes and their relative mass.

\begin{figure}[!htbp]
  \centering
  \includegraphics[height=0.155\textheight]{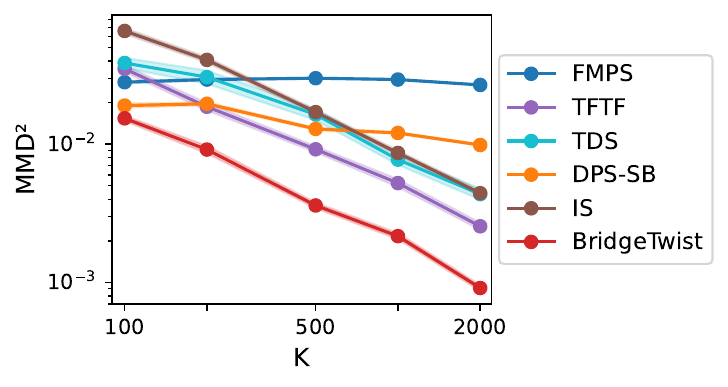}\hfill
  \includegraphics[height=0.155\textheight]{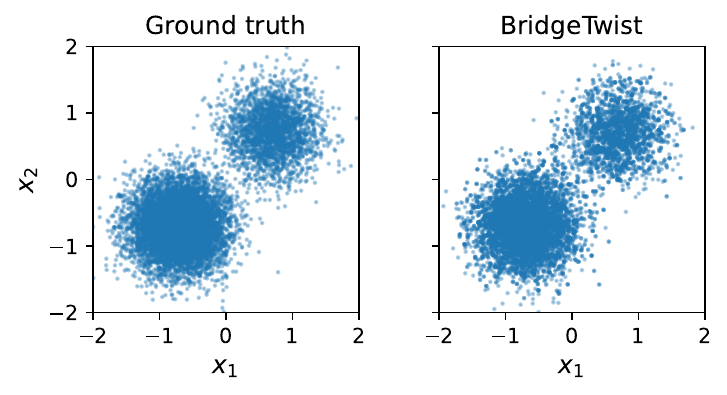}
  \caption{Toy benchmark.
Left: MMD$^2$ to the analytic posterior as the number of particles $K$ increases.
Right: ground-truth posterior samples and \ours{} samples, with $10^4$ samples in each panel.}
  \label{fig:toy}
\end{figure}

\subsection{Image conditional sampling}\label{sec:exp:image}

For a fair comparison, all methods use uniform grids with $N=800$ steps.
\ours{}, DPS-SB, and IS share the Schr\"odinger bridge prior; FMPS and TFTF share the flow prior; and TDS uses its native DDPM (MNIST) and iDDPM (CIFAR-10) priors.

\paragraph{Class-conditional sampling.}
A frozen classifier defines $\Ly(x_1)=p_\psi(y| x_1)$ on MNIST and CIFAR-10.
We measure accuracy with the likelihood classifier (ACC$_L$) and an independent external classifier (ACC$_E$).
Kernel inception distance (KID)~\citep{binkowski2018demystifying}, Density, and Coverage~\citep{naeem2020reliable} evaluate sample quality and coverage against real images within each class.
Table~\ref{tab:class-cond-k64} shows that \ours{} combines high external accuracy with low KID and broad coverage; Figure~\ref{fig:class-cond-cifar-air} illustrates class alignment and diversity.

\begin{table}[!htbp]
  \caption{Class-conditional metrics at $K=64$.
Dens./Cov.\ denote Density/Coverage.
Results are averaged over 100 random seeds. Bold marks the best value in each column.}
  \label{tab:class-cond-k64}
  \centering

  \setlength{\tabcolsep}{2.5pt}
  \renewcommand{\arraystretch}{0.96}
  \begin{adjustbox}{max width=\textwidth}
  \begin{tabular}{@{}lccccc@{\hspace{9pt}}ccccc@{}}
    \toprule
    & \multicolumn{5}{c}{MNIST} & \multicolumn{5}{c}{CIFAR-10} \\
    \cmidrule(lr){2-6}\cmidrule(l){7-11}
    Method & ACC$_L$ $\uparrow$ & ACC$_E$ $\uparrow$ & KID $\downarrow$ & Dens.\ $\uparrow$ & Cov.\ $\uparrow$ & ACC$_L$ $\uparrow$ & ACC$_E$ $\uparrow$ & KID $\downarrow$ & Dens.\ $\uparrow$ & Cov.\ $\uparrow$ \\
    \midrule
    FMPS & $0.997$ & $0.453$ & $0.0202$ & 0.026 & 0.007 & $0.899$ & $0.514$ & $0.1807$ & 0.504 & 0.390 \\
    TFTF & $0.977$ & $0.910$ & $0.0018$ & 0.954 & 0.534 & $0.986$ & $0.934$ & $0.0572$ & 1.161 & \textbf{0.600} \\
    TDS & $0.992$ & $0.942$ & $0.0015$ & 0.923 & 0.415 & $0.947$ & $0.869$ & $0.129$ & 1.159 & 0.342 \\
    DPS-SB & $\mathbf{0.999}$ & $0.619$ & $0.0010$ & 0.633 & 0.495 & $0.978$ & $0.585$ & $0.1051$ & 0.410 & 0.302 \\
    IS & $0.997$ & $0.920$ & $0.0007$ & 1.001 & 0.146 & $0.971$ & $0.865$ & $0.0744$ & 0.755 & 0.113 \\
    \ours{} & $\mathbf{0.999}$ & $\mathbf{0.952}$ & $\mathbf{0.0003}$ & \textbf{1.154} & \textbf{0.617} & $\mathbf{0.997}$ & $\mathbf{0.979}$ & $\mathbf{0.0403}$ & \textbf{1.288} & 0.596 \\
    \bottomrule
  \end{tabular}
  \end{adjustbox}
\end{table}

\begin{figure}[!t]
  \centering
  \begin{minipage}[t]{0.19\linewidth}
    \centering
    \scriptsize FMPS\\[-0.1pt]
    \includegraphics[width=\linewidth]{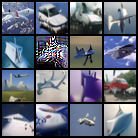}
  \end{minipage}\hfill
  \begin{minipage}[t]{0.19\linewidth}
    \centering
    \scriptsize TFTF\\[-0.1pt]
    \includegraphics[width=\linewidth]{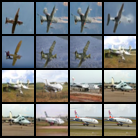}
  \end{minipage}\hfill
  \begin{minipage}[t]{0.19\linewidth}
    \centering
    \scriptsize DPS-SB\\[-0.1pt]
    \includegraphics[width=\linewidth]{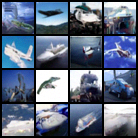}
  \end{minipage}\hfill
  \begin{minipage}[t]{0.19\linewidth}
    \centering
    \scriptsize IS\\[-0.1pt]
    \includegraphics[width=\linewidth]{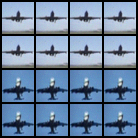}
  \end{minipage}\hfill
  \begin{minipage}[t]{0.19\linewidth}
    \centering
    \scriptsize \ours{}\\[-0.1pt]
    \includegraphics[width=\linewidth]{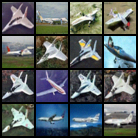}
  \end{minipage}
  \caption{CIFAR-10 airplane samples at $K=16$, after resampling based on the terminal weights.}
  \label{fig:class-cond-cifar-air}
\end{figure}

Table~\ref{tab:matched-nfe} compares methods at equal total numbers of backbone function evaluations (NFEs), counting each forward or backward pass once per particle.
Baselines adjust $K$ to fit the NFE budgets $\{B_1,B_2,B_3,B_4\}$, defined by \ours{} at $K\in\{4,16,64,256\}$.
FMPS leads at the smallest budget, while \ours{} performs the best at all the larger budgets.
Appendix~\ref{app:cost} compares runtime and memory usage.

\begin{table}[!htbp]
  \caption{CIFAR-10 ACC$_E$ at matched total NFEs.
Subscripts give standard deviations across repetitions.
Bold marks the best value.}
  \label{tab:matched-nfe}
  \centering

  \setlength{\tabcolsep}{6pt}
  \renewcommand{\arraystretch}{0.96}
  \begin{adjustbox}{max width=\textwidth}
  \begin{tabular}{@{}lcccc@{}}
    \toprule
    Method $\backslash$ total NFE & $B_1=6{,}408$ & $B_2=25{,}632$ & $B_3=102{,}528$ & $B_4=410{,}112$ \\
    \midrule
    FMPS & $\mathbf{0.775}_{\pm0.016}$ & $0.813_{\pm0.015}$ & $0.814_{\pm0.009}$ & $0.814_{\pm0.004}$ \\
    TFTF & $0.555_{\pm0.145}$ & $0.770_{\pm0.116}$ & $0.936_{\pm0.054}$ & $0.961_{\pm0.032}$ \\
    DPS-SB & $0.613_{\pm0.064}$ & $0.606_{\pm0.034}$ & $0.601_{\pm0.025}$ & $0.586_{\pm0.008}$ \\
    IS & $0.568_{\pm0.058}$ & $0.815_{\pm0.059}$ & $0.866_{\pm0.021}$ & $0.885_{\pm0.012}$ \\
    TDS & $0.585_{\pm0.120}$ & $0.638_{\pm0.098}$ & $0.875_{\pm0.113}$ & $0.918_{\pm0.059}$ \\
    \ours{} & $0.630_{\pm0.112}$ & $\mathbf{0.914}_{\pm0.061}$ & $\mathbf{0.979}_{\pm0.009}$ & $\mathbf{0.984}_{\pm0.006}$ \\
    \bottomrule
  \end{tabular}
  \end{adjustbox}

\end{table}

\paragraph{Text-prompted image generation.}
On CelebA-HQ, text prompts specify facial attributes, and a CLIP energy defines the text likelihood $\Ly(x_1)\propto\exp\{\lambda\,\operatorname{sim}(\phi_I(x_1),\phi_T(y))\}$~\citep{radford2021clip}.
We evaluate the CLIP similarity under the likelihood model (CLIP$_L$), all-attributes accuracy from an independent classifier (ACC$_E$), and pairwise LPIPS diversity within each prompt~\citep{zhang2018unreasonable,zhu2017toward}.
Table~\ref{tab:text2img-k25} shows that \ours{} attains the best prompt-image alignment and highest diversity simultaneously.
Figure~\ref{fig:text2img-ours} shows representative samples from \ours{}.

\begin{figure}[!htbp]
\begin{minipage}[t]{0.545\textwidth}
  \vspace{0.0pt}
  \centering
  \captionof{table}{Text-prompted image generation on CelebA-HQ at $K=25$.
Subscripts give standard deviations; bold marks the best value.}
  \label{tab:text2img-k25}

  \setlength{\tabcolsep}{3pt}
  \renewcommand{\arraystretch}{0.96}
  \begin{adjustbox}{max width=\textwidth}
  \begin{tabular}{@{}lccc@{}}
    \toprule
    Method & CLIP$_L$ $\uparrow$ & ACC$_E$ $\uparrow$ & LPIPS div.\ $\uparrow$ \\
    \midrule
    FMPS    & $0.307_{\pm0.001}$ & $0.689_{\pm0.020}$ & $0.012_{\pm0.003}$ \\
    TFTF    & $0.296_{\pm0.002}$ & $0.803_{\pm0.042}$ & $0.501_{\pm0.032}$ \\
    DPS-SB  & $0.306_{\pm0.002}$ & $0.794_{\pm0.024}$ & $0.599_{\pm0.054}$ \\
    \ours{} & $\mathbf{0.311}_{\pm0.002}$ & $\mathbf{0.827}_{\pm0.044}$ & $\mathbf{0.613}_{\pm0.048}$ \\
    \bottomrule
  \end{tabular}
  \end{adjustbox}
\end{minipage}\hfill
\begin{minipage}[t]{0.425\textwidth}
  \vspace{0pt}
  \centering
  \captionsetup{justification=centering,position=auto,aboveskip=0pt,belowskip=7pt}
  \begin{minipage}[t]{0.30\linewidth}
    \centering
    \includegraphics[width=\linewidth]{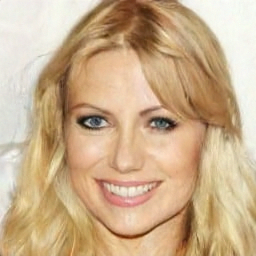}
    \captionof*{figure}{\scriptsize ``a smiling woman with blonde hair and bangs''}
  \end{minipage}\hfill
  \begin{minipage}[t]{0.30\linewidth}
    \centering
    \includegraphics[width=\linewidth]{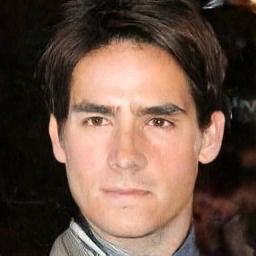}
    \captionof*{figure}{\scriptsize ``a man with black hair and an exposed forehead''}
  \end{minipage}\hfill
  \begin{minipage}[t]{0.30\linewidth}
    \centering
    \includegraphics[width=\linewidth]{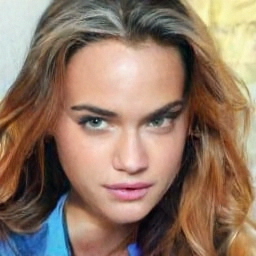}
    \captionof*{figure}{\scriptsize ``a woman with a serious expression''}
  \end{minipage}
  \vspace{-0.5em}
  \captionof{figure}{Text-prompted samples.}
  \label{fig:text2img-ours}
\end{minipage}
\end{figure}

\paragraph{Super-resolution, deblurring, and inpainting.}
We further compare with recent image restoration solvers, including PnP-Flow~\citep{martin2025pnpflow}, Flower~\citep{pourya2026flower}, and FlowDPS~\citep{kim2025flowdps}, on CelebA-HQ $4\times$ super-resolution and Gaussian deblurring, and on CIFAR-10 inpainting with half-image and random 50\% masks.
\ours{} achieves the highest peak signal-to-noise ratio (PSNR) and structural similarity (SSIM) on deblurring and ranks a close second to PnP-Flow on super-resolution (Table~\ref{tab:solvers}).
On CIFAR-10, it also achieves the highest SSIM under both inpainting masks.
These results show that \ours{} is competitive with specialized image restoration solvers while supporting more general conditioning.
See Appendices~\ref{app:inpainting-mean} and~\ref{app:expanded-inpainting} for additional results.

\begin{table}[!htbp]
  \caption{Image restoration on CelebA-HQ and CIFAR-10.
Subscripts give standard deviations; dashes indicate unavailable results; bold marks the best value.}
  \label{tab:solvers}
  \centering

  \setlength{\tabcolsep}{4pt}
  \renewcommand{\arraystretch}{0.96}
  \begin{adjustbox}{max width=\textwidth}
  \begin{tabular}{@{}lcccccc@{}}
    \toprule
    & \multicolumn{4}{c}{CelebA-HQ} & \multicolumn{2}{c}{CIFAR-10} \\
    \cmidrule(lr){2-5}\cmidrule(l){6-7}
    & \multicolumn{2}{c}{Super-resolution} & \multicolumn{2}{c}{Deblurring} & \multicolumn{2}{c}{Inpainting} \\
    \cmidrule(lr){2-3}\cmidrule(lr){4-5}\cmidrule(l){6-7}
    Method & PSNR $\uparrow$ & SSIM $\uparrow$ & PSNR $\uparrow$ & SSIM $\uparrow$ & SSIM (half) $\uparrow$ & SSIM (random) $\uparrow$ \\
    \midrule
    PnP-Flow & $\mathbf{28.47}_{\pm1.05}$ & $\mathbf{0.843}_{\pm0.020}$ & $30.64_{\pm1.08}$ & $0.818_{\pm0.019}$ & $0.623_{\pm0.104}$ & $0.940_{\pm0.109}$ \\
    Flower & $27.88_{\pm0.99}$ & $0.827_{\pm0.019}$ & $30.91_{\pm1.07}$ & $0.839_{\pm0.015}$ & -- & -- \\
    FlowDPS & $26.43_{\pm1.86}$ & $0.795_{\pm0.025}$ & $29.12_{\pm0.94}$ & $0.819_{\pm0.028}$ & -- & -- \\
    FMPS & $23.68_{\pm1.42}$ & $0.622_{\pm0.042}$ & $17.92_{\pm1.67}$ & $0.689_{\pm0.038}$ & $0.240_{\pm0.114}$ & $0.554_{\pm0.106}$ \\
    TFTF & $23.78_{\pm0.95}$ & $0.772_{\pm0.032}$ & $27.98_{\pm0.83}$ & $0.737_{\pm0.032}$ & $0.532_{\pm0.071}$ & $0.875_{\pm0.054}$ \\
    DPS-SB & $28.06_{\pm0.97}$ & $0.791_{\pm0.020}$ & $29.49_{\pm0.82}$ & $0.708_{\pm0.012}$ & $0.518_{\pm0.086}$ & $0.782_{\pm0.082}$ \\
    \ours{} & $28.33_{\pm0.88}$ & $0.836_{\pm0.024}$ & $\mathbf{31.13}_{\pm0.97}$ & $\mathbf{0.847}_{\pm0.020}$ & $\mathbf{0.655}_{\pm0.072}$ & $\mathbf{0.943}_{\pm0.087}$ \\
    \bottomrule
  \end{tabular}
  \end{adjustbox}
\end{table}


\subsection{Ablation studies}\label{sec:exp:ablation}

\noindent
\begin{minipage}[t]{0.51\linewidth}
\vspace{0pt}
Table~\ref{tab:ablations} compares endpoint predictor channels under the same SB prior on CIFAR-10.
Replacing the full predictor with either the velocity-only Euler predictor or the score-only Tweedie predictor from Section~\ref{sec:method:inversion} substantially degrades both class alignment and sample quality.
See Appendices~\ref{app:calibration} and~\ref{app:ess} for more diagnostics.
\end{minipage}\hfill
\begin{minipage}[t]{0.46\linewidth}\captionsetup{hypcap=false}
\captionsetup[table]{skip=0pt}
  \captionof{table}{CIFAR-10 predictor ablation at $K=16$.
Bold marks the best value in each column.}
  \label{tab:ablations}
  \centering
  \small
  \setlength{\tabcolsep}{3pt}
  \renewcommand{\arraystretch}{0.96}
\begin{adjustbox}{max width=\textwidth}
\begin{tabular}{@{}lccc@{}}
  \toprule
  Predictor     & ACC$_L$ $\uparrow$ & ACC$_E$ $\uparrow$ & KID $\downarrow$ \\
  \midrule
  Velocity-only & $0.526$           & $0.485$            & $0.2805$ \\
  Score-only    & $0.711$           & $0.514$            & $0.2339$ \\
  \ours{}       & $\mathbf{0.948}$  & $\mathbf{0.914}$   & $\mathbf{0.0873}$ \\
  \bottomrule
\end{tabular}
\end{adjustbox}
\end{minipage}
\par\medskip

Table~\ref{tab:ablation-components-full} isolates the $h$-transform guidance, tempering, and importance weight correction under the same SB prior on MNIST and CIFAR-10.
IS weights and resamples particles only at the endpoint, whereas DPS-SB uses guidance without importance correction.
On CIFAR-10, both yield lower ACC$_E$ and higher KID than \ours{}, as do variants without $h$-transform guidance or tempering.
On MNIST, removing tempering slightly increases ACC$_E$ while worsening KID.

\begin{table}[!htbp]
  \caption{Sampler omponent ablation at $K=64$ under the same SB prior.
Bold marks the best value in each column.}
  \label{tab:ablation-components-full}
  \centering

  \setlength{\tabcolsep}{2.5pt}
  \renewcommand{\arraystretch}{0.96}
  \begin{adjustbox}{max width=\textwidth}
  \begin{tabular}{@{}lccccccc@{}}
    \toprule
    & & & & \multicolumn{2}{c}{MNIST} & \multicolumn{2}{c}{CIFAR-10} \\
    \cmidrule(lr){5-6}\cmidrule(l){7-8}
    Variant & $h$-transform & tempering & weight corr. & ACC$_E$ $\uparrow$ & KID $\downarrow$ & ACC$_E$ $\uparrow$ & KID $\downarrow$ \\
    \midrule
    IS & $\times$ & $\times$ & endpoint only & 0.920 & 0.0007 & 0.865 & 0.0744 \\
    DPS-SB & $\checkmark$ & $\times$ & $\times$ & 0.619 & 0.0010 & 0.585 & 0.1051 \\
    \ours{} w/o $h$-transform & $\times$ & $\checkmark$ & $\checkmark$ & 0.932 & \textbf{0.0003} & 0.925 & 0.0437 \\
    \ours{} w/o tempering & $\checkmark$ & $\times$ & $\checkmark$ & \textbf{0.958} & 0.0004 & 0.938 & 0.0813 \\
    \ours{} & $\checkmark$ & $\checkmark$ & $\checkmark$ & 0.952 & \textbf{0.0003} & \textbf{0.979} & \textbf{0.0403} \\
    \bottomrule
  \end{tabular}
  \end{adjustbox}
\end{table}

Table~\ref{tab:ablation-hyper} varies the tempering exponent $a\in\{0,1,2,3\}$ in $\beta(t)=t^a$ and compares bridge-shaped, constant, and cosine diffusion schedules $g(t)$ on CIFAR-10.
The default combination of linear tempering ($a=1$) and the bridge-shaped schedule performs the best.
Tempering is important, since removing it ($a=0$) largely degrades performance, whereas \ours{} is insensitive to the specific choice among $a\in\{1,2,3\}$
For each $a$, our bridge-shaped design outperforms the common constant and cosine schedules.
Appendix~\ref{app:tempering-transfer} evaluates tempering in TDS and TFTF.

\begin{table}[!htbp]
  \caption{Joint tempering and diffusion ablation on CIFAR-10 at $K=64$.
Each entry reports ACC$_E$ ($\uparrow$) / KID ($\downarrow$).
Bold marks the best values.}
  \label{tab:ablation-hyper}
  \centering

  \setlength{\tabcolsep}{6pt}
  \renewcommand{\arraystretch}{0.96}
  \begin{adjustbox}{max width=\textwidth}
  \begin{tabular}{@{}lccc@{}}
    \toprule
    $a$ $\backslash$ $g(t)$ & bridge-shaped (default) & constant ($c_g\sigma$) & cosine ($c_g\sigma\cos(\pi t/2)$) \\
    \midrule
    $a=0$ (no tempering) & 0.938 / 0.0813 & 0.932 / 0.1489 & 0.897 / 0.0920 \\
    $a=1$ (default) & $\mathbf{0.979}$ / $\mathbf{0.0403}$ & 0.967 / 0.1099 & 0.945 / 0.0534 \\
    $a=2$ & 0.971 / 0.0426 & 0.970 / 0.1034 & 0.916 / 0.0487 \\
    $a=3$ & 0.967 / 0.0470 & 0.946 / 0.1013 & 0.909 / 0.0493 \\
    \bottomrule
  \end{tabular}
  \end{adjustbox}
\end{table}

In summary, these ablations under the same SB prior support attributing the gains to the velocity--score endpoint predictor, $h$-transform guidance, tempering, and bridge-shaped diffusion.
Appendix~\ref{app:pretrained-models} further shows the gains of \ours{} do
not come from a better prior model. 

\label{sec:experiments:end}

\section{Discussion}\label{sec:discussion}

We introduce \ours{}, a training-free conditional sampler that reuses pretrained unconditional Schr\"odinger bridge models.
Central to our approach are three contributions to SB conditioning: a plug-in lookahead built from the velocity--score endpoint predictor, temporal tempering that moderates this lookahead when conditional endpoint uncertainty is large, and a bridge-shaped diffusion schedule that concentrates exploration in the path interior.
These designs guide and reweight particles through a sequence of intermediate targets toward the endpoint posterior.
We establish a finite-particle error bound for estimates under the posterior, and demonstrate competitive performance across class-conditional and text-prompted image generation, and image restoration.

Beyond the sampler itself, the endpoint predictor offers a two-channel view that connects the transport perspective of flow models with the denoising perspective of diffusion models.
For an SB under a Brownian reference, the velocity channel describes transport along the bridge, while the score channel corrects for noise in the current state.
Our ablations support combining both channels to improve class alignment and sample quality.

\paragraph{Limitations and future work.}
First, \ours{} currently is restricted to Schr\"odinger bridges with the Brownian reference, and extending it to other references require reference-specific endpoint predictors.
Second, its twist approximation uses only the conditional endpoint mean and can be inaccurate for strongly curved likelihoods or multimodal conditional endpoint laws.
The second-order extension in Appendix~\ref{app:second-order-twist} improves some low-dimensional cases, but full covariance estimation is expensive in high dimensions and requires scalable approximations.
Third, the sampler targets the posterior induced by the learned and discretized prior, leaving prior-learning and discretization errors.
Fourth, preparing the SB prior also requires an additional score head, with a one-time computational overhead relative to directly using pretrained flow or diffusion models.
Finally, future work also includes protein design and broader single-cell transport applications, building on the initial example in Appendix~\ref{app:scrna}.

\section*{Acknowledgments and Disclosures}
The authors thank the anonymous referees for their insightful and constructive comments, which have led to an improved version of the paper.
This work was supported by the National Natural Science Foundation of China (Grant No. 12601554), the China Postdoctoral Science Foundation (Grant No. 2025M783111), and the Shenzhen Science and Technology Program (Grant No. AI2026019).
The authors declare no competing interests.

\clearpage
\bibliography{refs}
\bibliographystyle{unsrtnat}


\clearpage
\appendix
\addtocontents{toc}{\protect\setcounter{tocdepth}{2}}
\begingroup

\renewcommand{\contentsname}{Appendix contents}
\pdfbookmark[0]{Appendix contents}{appendix-contents}
\tableofcontents
\endgroup
\clearpage
\section{Proofs}\label{app:proofs}

In this appendix, we prove main theoretical results. Appendix~\ref{app:proof-inversion} proves the endpoint decomposition in Proposition~\ref{thm:inversion}, Appendix~\ref{app:proof-fokker-planck} proves the marginal preservation property of the inference SDE, and Appendix~\ref{app:proof-exactness} proves the finite-particle bound of Theorem~\ref{thm:exactness}; Appendix~\ref{app:proof-beta-star} proves Remark~\ref{rem:beta-star}.

We work in the Brownian-reference Schr\"odinger bridge setup of Section~\ref{sec:setup-sb}: conditional on $(X_0,X_1)$,
\[
X_t \mid (X_0,X_1) \sim \cN(M_t,\TT_t^2\I),
\qquad
M_t=(1-t)X_0+tX_1,
\qquad
\TT_t^2=\sigma^2t(1-t).
\]
We also write $D=X_1-X_0$.
Throughout, $p_t$ denotes the SB marginal density and is assumed to be continuously differentiable on $\RR^d$ for $t\in(0,1)$.

\subsection{Proof of Proposition~\ref{thm:inversion}}\label{app:proof-inversion}

\begin{proof}
Fix $t\in(0,1)$ and $x\in\RR^d$.
The Brownian reference SB is a mixture of Brownian bridges over its endpoint coupling.
Conditional on $X_t=x$ and $X_1=x_1$, the future path is therefore a Brownian bridge from $(t,x)$ to $(1,x_1)$, satisfying
\[
\dd X_s=\frac{x_1-X_s}{1-s}\,\dd s+\sigma\,\dd\widetilde W_s,
\qquad t\le s<1,
\]
where $\widetilde W$ is Brownian motion under the conditional law.
Averaging this drift at time $t$ over $X_1|X_t=x$ gives
\[
b^\star_t(x)
=\EE\Big[\frac{X_1-X_t}{1-t}\,\Big|\,X_t=x\Big]
=\frac{\EE[X_1\mid X_t=x]-x}{1-t}.
\]
Using $b^\star_t=\vstar_t+\sigma^2\sstar_t/2$ and rearranging gives
\begin{equation}\label{eq:app-mu-1}
\EE[X_1\mid X_t=x]
=x+(1-t)\vstar_t(x)+\frac{\sigma^2}{2}(1-t)\sstar_t(x).
\end{equation}

To recover $\EE[M_t| X_t=x]$, let $\nu_t$ be the law of $M_t$ and $\phi_{\TT_t}$ the $\cN(0,\TT_t^2\I)$ density.
Since $X_t|M_t\sim\cN(M_t,\TT_t^2\I)$, the marginal density of $X_t$ is the convolution
\begin{equation}\label{eq:app-convolution}
p_t(x)=\int\phi_{\TT_t}(x-m)\,\nu_t(\dd m).
\end{equation}
Differentiating Eq.~\eqref{eq:app-convolution} with respect to $x$ gives
\begin{align}
\nabla_x p_t(x)
&=\int\nabla_x\phi_{\TT_t}(x-m)\,\nu_t(\dd m)\notag\\
&=-\frac{1}{\TT_t^2}\int(x-m)\phi_{\TT_t}(x-m)\,\nu_t(\dd m)\notag\\
&=-\frac{1}{\TT_t^2}\bigg[xp_t(x)-\int m\phi_{\TT_t}(x-m)\,\nu_t(\dd m)\bigg].
\label{eq:app-density-gradient}
\end{align}
By Bayes' rule, the conditional law of $M_t$ given $X_t=x$ is $\phi_{\TT_t}(x-m)\nu_t(\dd m)/p_t(x)$, so
\[
\int m\phi_{\TT_t}(x-m)\,\nu_t(\dd m)
=p_t(x)\EE[M_t\mid X_t=x].
\]
Dividing Eq.~\eqref{eq:app-density-gradient} by $p_t(x)$ and rearranging yields Tweedie's identity for the interpolation $M_t$,
\begin{equation}\label{eq:app-mu-M}
\EE[M_t\mid X_t=x]
=x+\TT_t^2\nabla_x\log p_t(x)
=x+\sigma^2t(1-t)\sstar_t(x).
\end{equation}

Finally, $X_1=M_t+(1-t)D$ and Eqs.~\eqref{eq:app-mu-1} and~\eqref{eq:app-mu-M} yield
\begin{equation}\label{eq:app-mu-D}
\EE[D\mid X_t=x]
=\frac{\EE[X_1\mid X_t=x]-\EE[M_t\mid X_t=x]}{1-t}
=\vstar_t(x)+\frac{1-2t}{2}\sigma^2\sstar_t(x).
\end{equation}
\end{proof}

\begin{corollary}[Source-side conditional mean]\label{cor:source}
Under the same assumptions, for every $t\in(0,1)$ and $x\in\RR^d$,
\[
\EE[X_0\mid X_t=x]
=x-t\,\vstar_t(x)+\frac{\sigma^2}{2}\,t\,\sstar_t(x).
\]
\end{corollary}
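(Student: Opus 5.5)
The corollary is pure algebra on top of Proposition~\ref{thm:inversion}; no new probabilistic input is needed. The plan uses the pathwise identity $X_0=M_t-tD$. This holds because $M_t=(1-t)X_0+tX_1=X_0+tD$. Taking conditional expectations given $X_t=x$ and using linearity (all quantities are integrable whenever the proposition's conditional means exist) gives
\[
\EE[X_0\mid X_t=x]=\EE[M_t\mid X_t=x]-t\,\EE[D\mid X_t=x].
\]

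Next I substitute Eq.~\eqref{eq:app-mu-M} for the first term and Eq.~\eqref{eq:app-mu-D} for the second:
\[
x+\sigma^2t(1-t)\sstar_t(x)-t\vstar_t(x)-\tfrac{t(1-2t)}{2}\sigma^2\sstar_t(x).
\]
Collecting the score coefficients gives $\sigma^2 t\big[(1-t)-\tfrac{1-2t}{2}\big]=\tfrac{\sigma^2}{2}t$. This yields the stated formula.

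As a consistency check, I can cross-validate against Eq.~\eqref{eq:app-mu-1}. Since $X_0=(X_1-(1-t)D)-D\cdot 0$, one can equally write $X_0=X_1-D$. Then
\[
\EE[X_1\mid X_t=x]-\EE[D\mid X_t=x]=x+(1-t)\vstar_t+\tfrac{\sigma^2}{2}(1-t)\sstar_t-\vstar_t-\tfrac{1-2t}{2}\sigma^2\sstar_t,
\]
which gives the same $x-t\vstar_t+\tfrac{\sigma^2}{2}t\sstar_t$.

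An optional time-reversal argument would also work. The reversed bridge has drift $-\vstar_t+\tfrac{\sigma^2}{2}\sstar_t$, and the Brownian-bridge-mixture argument of the proposition, run backward from $t$ to $0$, gives the same result. I would mention this only as intuition.

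There is no real obstacle here. The only point to be careful about is the coefficient bookkeeping in the score term, which the second decomposition double-checks.
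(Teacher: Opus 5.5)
Your proof is correct and follows the paper's argument exactly: write $X_0=M_t-tD$, substitute Eqs.~\eqref{eq:app-mu-M} and~\eqref{eq:app-mu-D}, and collect the score coefficient $\sigma^2t[(1-t)-\tfrac{1-2t}{2}]=\tfrac{\sigma^2}{2}t$. One small slip in your cross-check: the expression $(X_1-(1-t)D)-D\cdot 0$ equals $M_t$, not $X_0$, but the identity you actually use, $X_0=X_1-D$, follows directly from the definition of $D$, so the check still goes through.
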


\begin{proof}
The source state satisfies $X_0=M_t-tD$.
Combining \eqref{eq:app-mu-M} and \eqref{eq:app-mu-D},
\begin{align*}
\EE[X_0\mid X_t=x]
&=\EE[M_t\mid X_t=x]-t\,\EE[D\mid X_t=x]\\
&=x+\sigma^2t(1-t)\sstar_t(x)
-t\vstar_t(x)
-t\frac{1-2t}{2}\sigma^2\sstar_t(x)\\
&=x-t\vstar_t(x)
+\sigma^2t\left[(1-t)-\frac{1-2t}{2}\right]\sstar_t(x)\\
&=x-t\vstar_t(x)+\frac{\sigma^2}{2}t\sstar_t(x).
\end{align*}
\end{proof}

\subsection{Inference SDE preserves the SB marginal}\label{app:proof-fokker-planck}

\begin{proposition}\label{prop:fokker-planck}
Let $g\in C([0,1],\RR_{\geq 0})$ be an inference diffusion schedule.
The Brownian-reference Schrodinger bridge SDE~\eqref{eq:sb-sde} is
\begin{equation}\label{eq:app-sb-original-sde}
\dd \bar X_t=b^\star_t(\bar X_t)\,\dd t+\sigma\,\dd W_t,
\quad
b^\star_t(x)=\vstar_t(x)+\frac{1}{2}\sigma^2\sstar_t(x),
\quad
\bar X_0\sim q_0,
\end{equation}
with $\sstar_t(x)=\nabla\log p_t(x)$ and $q_0=p_0$. Define the inference-time drift
\[
b_{\theta,g}(t,x):=\vstar_t(x)+\frac{1}{2}g(t)^2\sstar_t(x),
\]
and the corresponding inference SDE
\begin{equation}\label{eq:app-inf-sde}
\dd X_t=b_{\theta,g}(t,X_t)\,\dd t+g(t)\,\dd W_t,
\qquad X_0\sim q_0.
\end{equation}
For any such $g$, the marginal density of $X_t$ under \eqref{eq:app-inf-sde} equals the SB marginal density $p_t$ of \eqref{eq:app-sb-original-sde} for every $t\in(0,1)$.
\end{proposition}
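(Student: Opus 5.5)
The plan is to show that the inference SDE~\eqref{eq:app-inf-sde} and the probability-flow ODE~\eqref{eq:sb-ode} generate the same Fokker--Planck (continuity) equation for the density. Then we conclude by uniqueness of weak solutions with initial datum $q_0=p_0$.

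\emph{Step 1: the SB marginals solve a continuity equation.} I will use the SB SDE~\eqref{eq:app-sb-original-sde}. Its marginal $p_t$ satisfies, in the weak sense on $(0,1)$,
\[
\partial_t p_t=-\nabla\cdot(b^\star_t p_t)+\tfrac{\sigma^2}{2}\Delta p_t .
\]
Since $\sstar_t=\nabla\log p_t$ and $p_t\in C^1$, we have $\tfrac{\sigma^2}{2}\Delta p_t=\nabla\cdot(\tfrac{\sigma^2}{2}\sstar_t p_t)$. Substituting $b^\star_t=\vstar_t+\tfrac{\sigma^2}{2}\sstar_t$ cancels the score terms and leaves
\[
\partial_t p_t=-\nabla\cdot(\vstar_t p_t).
\]
This is exactly the statement that the ODE~\eqref{eq:sb-ode} transports $p_t$, and it is consistent with the paper's definition of $\vstar$.

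\emph{Step 2: $p_t$ also solves the Fokker--Planck equation of the inference SDE.} The Fokker--Planck equation of~\eqref{eq:app-inf-sde} is
\[
\partial_t\rho_t=-\nabla\cdot\big(b_{\theta,g}(t,\cdot)\rho_t\big)+\tfrac{g(t)^2}{2}\Delta\rho_t .
\]
Plug in $\rho_t=p_t$ and use $\tfrac{g(t)^2}{2}\Delta p_t=\nabla\cdot(\tfrac{g(t)^2}{2}\sstar_t p_t)$. The right-hand side then becomes
\[
-\nabla\cdot(\vstar_t p_t)-\nabla\cdot\big(\tfrac{g^2}{2}\sstar_t p_t\big)+\nabla\cdot\big(\tfrac{g^2}{2}\sstar_t p_t\big)=-\nabla\cdot(\vstar_t p_t)=\partial_t p_t .
\]
So $p_t$ is a weak solution of the Fokker--Planck equation of the inference SDE, with the correct initial condition $p_0=q_0$. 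Only continuity of $g$ enters, since $g(t)^2$ is just a time-dependent coefficient; the case $g\equiv0$ is the ODE itself.

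\emph{Step 3: identify the law of $X_t$.} The law $\mathrm{Law}(X_t)$ of the solution of~\eqref{eq:app-inf-sde} is also a weak solution of the same linear Fokker--Planck equation with the same initial law. By uniqueness of measure-valued solutions, the two coincide, so $\mathrm{Law}(X_t)=p_t$.

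This uniqueness step is the main obstacle, since the statement assumes little regularity. I would invoke the regularity conditions implicit in the setup: well-posedness of~\eqref{eq:app-inf-sde}, and local Lipschitz continuity with linear growth of $\vstar,\sstar$ on compact subintervals of $(0,1)$. Under these, a superposition or duality argument gives uniqueness for the linear equation. The first route is the Figalli/Trevisan superposition principle, which represents any integrable weak solution as the marginal of a martingale solution, combined with uniqueness of the martingale problem. The alternative is a duality argument via the backward Kolmogorov equation. I would state this explicitly as a standing assumption rather than prove it from scratch. The endpoint behavior near $t=0$ is handled by weak continuity of $t\mapsto p_t$ together with $p_0=q_0$.
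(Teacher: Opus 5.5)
Your proposal is correct and follows the same route as the paper. You show that $p_t$ satisfies $\partial_t p_t=-\nabla\cdot(p_t\vstar_t)$, use $p_t\sstar_t=\nabla p_t$ to cancel the $g(t)^2$ drift and diffusion terms so that $p_t$ solves the inference Fokker--Planck equation with initial datum $q_0$, and conclude by uniqueness. The only differences are that you derive the continuity equation from the SB SDE instead of citing the probability-flow ODE, and that you state explicitly the uniqueness hypothesis the paper uses tacitly.

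One point to tighten: this uniqueness must hold for the Cauchy problem started at $t=0$, e.g.\ on $[0,1-\delta]$, which the superposition principle gives under well-posedness of the SDE from time $0$. Lipschitz control only on compact subintervals of $(0,1)$, together with weak continuity at $t=0$, would not by itself rule out a second solution branching off at $t=0$.
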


\begin{proof}
The SB probability-flow ODE associated with the marginal velocity $\vstar$ transports $p_0=q_0$ along the marginal densities $\{p_t\}_{t\in(0,1)}$.
Equivalently, these densities satisfy the continuity equation
\begin{equation}\label{eq:app-sb-continuity}
\partial_t p_t(x)
=-\nabla\cdot[p_t(x)\vstar_t(x)].
\end{equation}

Let $\tilde p_t$ denote the marginal density of the SDE \eqref{eq:app-inf-sde}.
Its Fokker--Planck equation is
\begin{equation}\label{eq:app-fp}
\partial_t\tilde p_t(x)
=-\nabla\cdot[\tilde p_t(x)b_{\theta,g}(t,x)]
+\frac{1}{2}g(t)^2\Delta\tilde p_t(x),
\end{equation}
with initial condition $\tilde p_0=q_0$.
Because the SB source constraint gives $p_0=q_0$, the two initial conditions coincide.
We now check that the SB density $p_t$ also solves \eqref{eq:app-fp}.
Substituting $p_t$ in the right-hand side of
\eqref{eq:app-fp}, and using
$b_{\theta,g}=\vstar+\frac{1}{2}g^2\sstar$, gives
\begin{align}
&-\nabla\cdot[p_t b_{\theta,g}]
+\frac{1}{2}g(t)^2\Delta p_t \notag\\
&\qquad
=-\nabla\cdot[p_t\vstar]
-\frac{1}{2}g(t)^2\nabla\cdot[p_t\sstar]
+\frac{1}{2}g(t)^2\Delta p_t.
\label{eq:app-fp-cancel-start}
\end{align}
Since $\sstar_t(x)=\nabla\log p_t(x)$,
\[
p_t(x)\sstar_t(x)=p_t(x)\nabla\log p_t(x)=\nabla p_t(x),
\]
and hence
\[
\nabla\cdot[p_t\sstar]=\nabla\cdot[\nabla p_t]=\Delta p_t.
\]
The last two terms in \eqref{eq:app-fp-cancel-start} therefore cancel exactly:
\[
-\frac{1}{2}g(t)^2\nabla\cdot[p_t\sstar]
+\frac{1}{2}g(t)^2\Delta p_t=0.
\]
Then the right-hand side of \eqref{eq:app-fp-cancel-start} reduces to $-\nabla\cdot[p_t\vstar]$,
which equals $\partial_t p_t$ by the continuity equation \eqref{eq:app-sb-continuity}.
Thus $p_t$ satisfies the same Fokker--Planck equation and the same initial condition as $\tilde p_t$, and hence $\tilde p_t=p_t$ for all $t\in(0,1)$.

This result is independent of the specific choice of $g$. Changing $g$ adds both a diffusion term and the drift term $\frac{1}{2}g(t)^2\nabla\log p_t$ with the same coefficient, so the two contributions cancel at the density level.
This is the flexible schedule property used in Section~\ref{sec:method:algorithm}.
\end{proof}

\subsection{Proof of Theorem~\ref{thm:exactness}}\label{app:proof-exactness}

Theorem~\ref{thm:exactness} uses the following regularity conditions on the tempered twist and the Gaussian proposal kernels of Algorithm~\ref{alg:bridgetwist}.

\begin{assumption}[Endpoint twist consistency]\label{ass:terminal}
For all $x\in\RR^d$, $\hh_N(x)^{\beta_N}=\Ly(x)$.
\end{assumption}

\begin{assumption}[Bounded likelihood and twist]\label{ass:bounded-twist}
There are constants $0<\ell_n\le u_n<\infty$ such that $\ell_n\le\hh_n(x)^{\beta_n}\le u_n$ for all $x\in\RR^d$ and $n\in\{0,\ldots,N\}$; in particular, $\ell_N\le\Ly(x)\le u_N$.
\end{assumption}

\begin{assumption}[Bounded log-twist gradient]\label{ass:bounded-grad}
For each $n$ with $\beta_n>0$, the map $x\mapsto \log\hh_n(x)^{\beta_n}$ is continuously differentiable and has uniformly bounded gradient.
\end{assumption}

\begin{assumption}[Proposal coverage]\label{ass:proposal-variance}
For every $n$, the prior covariance is nondegenerate, i.e., $\sigma_{M,n}^2:=g(t_{n-1})^2\Delta_n>0$, and the proposal covariance $\Sigma_{q,n}$ of Section~\ref{sec:method:algorithm} strictly dominates it:
\[
\Sigma_{M,n}=\sigma_{M,n}^2\I,
\qquad
\Sigma_{q,n}=(\sigma_{M,n}^2+\varepsilon)\I,
\qquad
\varepsilon>0.
\]
The nondegeneracy condition is satisfied by the bridge-shaped schedule with $\delta_g>0$ used in the experiments and by any strictly positive inference diffusion schedule on the discretization grid.
\end{assumption}

\paragraph{Mildness of the assumptions.}
Assumption~\ref{ass:terminal} holds by construction because $\beta_N=1$ and $\hh_N(x)=\Ly(x)$.
Assumption~\ref{ass:bounded-twist} is satisfied by bounded likelihoods with a positive floor and by CLIP likelihoods with bounded similarity scores and a finite guidance scale $\lambda$.
It also implies $\ell_N\le h_n(x)\le u_N$, so the relative twist mismatch $R_{n,N}$ in Theorem~\ref{thm:exactness} is finite.
Assumption~\ref{ass:bounded-grad} follows from bounded likelihood gradients and bounded Jacobians of $(\vthe,\sthe)$ through the chain rule applied to Eq.~\eqref{eq:hmu1t}, matching the regularity used in diffusion twisting analyses~\citep[Appendix~A.5]{wu2023practical}; the explicit chain-rule derivation is recorded below.
Assumption~\ref{ass:proposal-variance} is the standard absolute-continuity safeguard: the small $\varepsilon\I$ inflation gives the proposal heavier Gaussian tails than the prior kernel.

\begin{lemma}[Ratio estimator]\label{lem:ratio}
Let $(Z_i,F_i)_{i=1}^K$ be independent pairs with $Z_i>0$, $\EE[Z_i^2]<\infty$, and $F_i\in[m,M]$ almost surely.
For $\theta:=\sum_i\EE[Z_iF_i]/\sum_i\EE[Z_i]$ and $\hat\theta:=\sum_iZ_iF_i/\sum_iZ_i$, we have
\[
\EE[(\hat\theta-\theta)^2]\le8(M-m)^2\,\frac{\sum_i\EE[Z_i^2]}{(\sum_i\EE[Z_i])^2}.
\]
\end{lemma}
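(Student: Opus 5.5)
The plan is to center everything and linearize the ratio around its population value. Set $\mu:=\sum_i\EE[Z_i]$ and $\bar Z:=\sum_i Z_i$. Since $\hat\theta-\theta$ is unchanged when $\theta$, $m$ and $M$ are all shifted by a common constant, we may shift so that $\theta$ sits at the origin of $[m,M]$. Write $G_i:=F_i-\theta$. Then $|G_i|\le M-m$ almost surely, because $\theta$ is a weighted average of the $\EE[Z_iF_i]/\EE[Z_i]\in[m,M]$ and hence lies in $[m,M]$. By the definition of $\theta$ we also have $\sum_i\EE[Z_iG_i]=0$.

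The starting identity is
\[
\hat\theta-\theta=\frac{\sum_i Z_iG_i}{\bar Z}.
\]
I would then split according to whether $\bar Z$ is close to $\mu$.

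\textbf{Good event} $\{\bar Z\ge \mu/2\}$. Here $(\hat\theta-\theta)^2\le 4(\sum_iZ_iG_i)^2/\mu^2$. Since $\sum_iZ_iG_i$ is a sum of independent terms with total mean zero, its second moment equals its variance. Bounding each variance by the corresponding second moment gives
\[
\EE\Big[\Big(\sum_iZ_iG_i\Big)^2\Big]=\sum_i\Var(Z_iG_i)\le (M-m)^2\sum_i\EE[Z_i^2].
\]
This case therefore contributes at most $4(M-m)^2\sum_i\EE[Z_i^2]/\mu^2$.

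\textbf{Bad event} $\{\bar Z<\mu/2\}$. Here I use the trivial bound $|\hat\theta-\theta|\le M-m$, since both quantities lie in $[m,M]$. By Chebyshev,
\[
\PP(\bar Z<\mu/2)\le \PP(|\bar Z-\mu|>\mu/2)\le \frac{4\Var(\bar Z)}{\mu^2}\le \frac{4\sum_i\EE[Z_i^2]}{\mu^2},
\]
where the last step uses independence. This case also contributes at most $4(M-m)^2\sum_i\EE[Z_i^2]/\mu^2$.

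Adding the two contributions gives the stated constant $8$. The main point to watch is the good-event step: the centering must use the population $\theta$, not $\hat\theta$, so that the cross terms vanish by independence and the exact mean-zero property. We also need $\theta\in[m,M]$ to get $|G_i|\le M-m$. Everything else is routine, and positivity of $Z_i$ guarantees $\bar Z>0$, so $\hat\theta$ is well defined.
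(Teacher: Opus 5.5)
Your proposal is correct and follows the paper's proof essentially step for step. It uses the same identity $\hat\theta-\theta=\sum_iZ_i(F_i-\theta)/\sum_iZ_i$ and the same split on $\{\sum_iZ_i\ge\mu/2\}$, with the second-moment bound on the good event and Chebyshev plus $|\hat\theta-\theta|\le M-m$ on the bad event, each giving $4(M-m)^2\sum_i\EE[Z_i^2]/\mu^2$; you also spell out two points the paper leaves implicit, namely why $\theta\in[m,M]$ and why the second moment of the centered sum equals its variance.
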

\begin{proof}
Set $D:=\sum_iZ_i$, $d:=\EE[D]$, and $Y:=\sum_iZ_i(F_i-\theta)$.
Independence gives $\EE[Y]=0$, $\EE[Y^2]\le(M-m)^2\sum_i\EE[Z_i^2]$, and $\PP(D<d/2)\le4\sum_i\EE[Z_i^2]/d^2$ by Chebyshev's inequality.
Since $\hat\theta-\theta=Y/D$ and $|\hat\theta-\theta|\le M-m$, splitting on $D\ge d/2$ yields
\[
\EE[(\hat\theta-\theta)^2]
\le\frac{4\EE[Y^2]}{d^2}+(M-m)^2\PP(D<d/2)
\le\frac{8(M-m)^2\sum_i\EE[Z_i^2]}{d^2}.
\]
\end{proof}

\begin{proof}[Proof of Theorem~\ref{thm:exactness}]
The proof has five steps and follows the standard twisted SMC theory used in prior work~\citep{wu2023practical} and related Feynman--Kac analyses~\citep[Chapters~5 and~11]{chopin2020introduction}.
We first verify the final Feynman--Kac target on path space.
We then verify that the incremental weights are uniformly bounded.
We next express Algorithm~\ref{alg:bridgetwist} as a Feynman--Kac particle system whose future kernels are the likelihood-conditioned prior dynamics, bound the sampling error introduced at one step, and propagate the one-step errors to step $n=N$.

\paragraph{Step 1: verify the Feynman--Kac target and endpoint marginal.}
Let $Q_N$ denote the proposal path law used by Algorithm~\ref{alg:bridgetwist},
\[
Q_N(\dd x_{t_{0:N}})
=q_0(\dd x_{t_0})\prod_{n=1}^N q_n(x_{t_n}\mid x_{t_{n-1}},y)\,\dd x_{t_n},
\]
where $q_0$ is the SB source distribution; the algorithm initializes particles directly from $q_0$, which is also the initial marginal of the discretized SB prior path measure $P^{(N)}_{\theta,g}(\dd x_{t_{0:N}})=q_0(\dd x_{t_0})\prod_{n=1}^N M_n(x_{t_{n-1}},\dd x_{t_n})$ defined in Section~\ref{sec:method:algorithm}.
The algorithm attaches the initial weight $\hh_0(x_{t_0})^{\beta_0}$ and then the incremental weights \eqref{eq:incremental-weight}.
Therefore the unnormalized path measure represented by the weighted particles at step $N$ is
\begin{equation}\label{eq:app-fk-from-proposal}
\Gamma_N^\beta(\dd x_{t_{0:N}})
=
Q_N(\dd x_{t_{0:N}})
\hh_0(x_{t_0})^{\beta_0}
\prod_{n=1}^N\omega_n(x_{t_{n-1}},x_{t_n}).
\end{equation}
Substituting the definition of $\omega_n$ gives the telescoping product
\begin{align}
\hh_0(x_{t_0})^{\beta_0}
\prod_{n=1}^N\omega_n(x_{t_{n-1}},x_{t_n})
&=
\hh_0(x_{t_0})^{\beta_0}
\prod_{n=1}^N
\frac{M_n(x_{t_{n-1}},x_{t_n})\hh_n(x_{t_n})^{\beta_n}}
{q_n(x_{t_n}\mid x_{t_{n-1}},y)\hh_{n-1}(x_{t_{n-1}})^{\beta_{n-1}}}
\notag\\
&=
\frac{\prod_{n=1}^N M_n(x_{t_{n-1}},x_{t_n})}
{\prod_{n=1}^N q_n(x_{t_n}\mid x_{t_{n-1}},y)}
\hh_N(x_{t_N})^{\beta_N}.
\label{eq:app-telescope}
\end{align}
All intermediate twist powers cancel pairwise, and only the endpoint twist remains.
Multiplying \eqref{eq:app-telescope} by the proposal path measure in \eqref{eq:app-fk-from-proposal} yields
\[
\Gamma_N^\beta(\dd x_{t_{0:N}})
=q_0(\dd x_{t_0})\prod_{n=1}^N M_n(x_{t_{n-1}},\dd x_{t_n})\,
\hh_N(x_{t_N})^{\beta_N}
=P_{\theta,g}^{(N)}(\dd x_{t_{0:N}})\,\hh_N(x_{t_N})^{\beta_N},
\]
where the second equality uses the factorization of $P_{\theta,g}^{(N)}$ recorded above.
By Assumption~\ref{ass:terminal}, $\hh_N(x_{t_N})^{\beta_N}=\Ly(x_{t_N})$, and hence
\begin{equation}\label{eq:app-terminal-target}
\Gamma_N^\beta(\dd x_{t_{0:N}})
=P_{\theta,g}^{(N)}(\dd x_{t_{0:N}})\,\Ly(x_{t_N}).
\end{equation}
Thus the final Feynman--Kac target is the discretized SB prior path law tilted only by the true endpoint likelihood.
The intermediate twists and the tempering schedule affect the proposal and weight variance, but not the final target.
Normalizing this path measure also identifies its endpoint marginal.
Let $Z_N^\beta=\Gamma_N^\beta(1)$ and $\pi_N^\beta=\Gamma_N^\beta/Z_N^\beta$.
For any bounded measurable $\varphi:\RR^d\to\RR$, \eqref{eq:app-terminal-target} gives
\begin{align}
\int \varphi(x_{t_N})\,\pi_N^\beta(\dd x_{t_{0:N}})
&=
\frac{
\int \varphi(x_{t_N})\,\Ly(x_{t_N})\,
P_{\theta,g}^{(N)}(\dd x_{t_{0:N}})
}{
\int \Ly(x_{t_N})\,
P_{\theta,g}^{(N)}(\dd x_{t_{0:N}})
}
\notag\\
&=
\frac{
\int \varphi(x_{t_N})\,\Ly(x_{t_N})\,
p_\theta^{(N)}(\dd x_{t_N})
}{
\int \Ly(x_{t_N})\,
p_\theta^{(N)}(\dd x_{t_N})
}
\notag\\
&=
\EE_{X_{t_N}\sim\PosN}[\varphi(X_{t_N})].
\label{eq:app-endpoint-marginal}
\end{align}
The second equality integrates out $x_{t_{0:N-1}}$ under the discretized SB prior path law, whose endpoint marginal is $p_\theta^{(N)}$.
This proves that the endpoint marginal of $\pi_N^\beta$ is exactly the posterior $\PosN$ appearing in Theorem~\ref{thm:exactness}.

\paragraph{Step 2: bound the incremental weights.}
The incremental weight factors as
\[
\omega_n(x_{t_{n-1}},x_{t_n})
=
\frac{\hh_n(x_{t_n})^{\beta_n}}{\hh_{n-1}(x_{t_{n-1}})^{\beta_{n-1}}}
\frac{M_n(x_{t_{n-1}},x_{t_n})}
{q_n(x_{t_n}\mid x_{t_{n-1}},y)}.
\]
By Assumption~\ref{ass:bounded-twist}, the first factor lies in $[\ell_n/u_{n-1},u_n/\ell_{n-1}]$ and is uniformly bounded on the finite grid.
It remains to bound the Gaussian density ratio.

Write
\[
M_n(\cdot\mid x_{t_{n-1}})=\cN(m_{M,n},a_n\I),
\qquad
q_n(\cdot\mid x_{t_{n-1}},y)=\cN(m_{q,n},c_n\I),
\]
where $a_n:=g(t_{n-1})^2\Delta_n>0$, $c_n:=a_n+\varepsilon$, and $\delta_n:=m_{q,n}-m_{M,n}$.
By Eqs.~\eqref{eq:prior-kernel} and \eqref{eq:guided-proposal},
\[
\begin{aligned}
\delta_n(x_{t_{n-1}})
&=w_g\,g(t_{n-1})^2\Delta_n\,\beta_{n-1}\nabla_x\log\hh_{n-1}(x_{t_{n-1}})\\
&=w_g\,g(t_{n-1})^2\Delta_n\,\nabla_x\log\!\big(\hh_{n-1}(x_{t_{n-1}})^{\beta_{n-1}}\big).
\end{aligned}
\]
If $\beta_{n-1}=0$ or $w_g=0$, this shift is zero; otherwise Assumption~\ref{ass:bounded-grad}, together with continuity of $g$ on $[0,1]$, the finite hyperparameter $w_g\geq 0$, and the finite grid, implies that $\|\delta_n(x_{t_{n-1}})\|$ is uniformly bounded over $(n,x_{t_{n-1}})$.
For $z:=x_{t_n}-m_{M,n}$,
\begin{align}
\log\frac{M_n(x_{t_{n-1}},x_{t_n})}{q_n(x_{t_n}\mid x_{t_{n-1}},y)}
&=
\frac{d}{2}\log\frac{c_n}{a_n}
-\frac{1}{2a_n}\|z\|^2
+\frac{1}{2c_n}\|z-\delta_n\|^2
\notag\\
&=
\frac{d}{2}\log\frac{c_n}{a_n}
-\frac{c_n-a_n}{2a_nc_n}\|z\|^2
-\frac{1}{c_n}\langle z,\delta_n\rangle
+\frac{1}{2c_n}\|\delta_n\|^2
\notag\\
&\le
\frac{d}{2}\log\frac{c_n}{a_n}
-\frac{\varepsilon}{2a_nc_n}\|z\|^2
+\frac{1}{c_n}\|z\|\,\|\delta_n\|
+\frac{1}{2c_n}\|\delta_n\|^2.
\label{eq:app-gaussian-ratio-bound-start}
\end{align}
For any $r\ge0$,
\[
-\frac{\varepsilon}{2a_nc_n}r^2
+\frac{\|\delta_n\|}{c_n}r
\le
\frac{a_n}{2c_n\varepsilon}\|\delta_n\|^2,
\]
which is the maximum of the scalar quadratic in $r$.
Applying this bound to $r=\|z\|$ in \eqref{eq:app-gaussian-ratio-bound-start} gives
\[
\log\frac{M_n(x_{t_{n-1}},x_{t_n})}{q_n(x_{t_n}\mid x_{t_{n-1}},y)}
\le
\frac{d}{2}\log\frac{c_n}{a_n}
+\frac{\|\delta_n\|^2}{2\varepsilon}.
\]
Because the time grid is finite, Assumption~\ref{ass:proposal-variance} makes the determinant term $\frac{d}{2}\log(c_n/a_n)$ finite uniformly over $n$.
The bound on $\delta_n$ derived from Assumption~\ref{ass:bounded-grad} makes the second term uniformly finite.
Therefore $M_n/q_n$ is uniformly bounded over $(n,x_{t_{n-1}},x_{t_n})$.
Combining this with Assumption~\ref{ass:bounded-twist}, each $\omega_n$ is positive and uniformly bounded.
The initial weight $\hh_0^{\beta_0}$ is also bounded by Assumption~\ref{ass:bounded-twist}.

\paragraph{Step 3: describe the exact distribution updates.}
We next write the exact updates that the particles approximate, so that we can isolate the error introduced at each step and follow its effect on the endpoint.
Write $H_n:=\hh_n^{\beta_n}$.
Starting from a state $x$ at step $n-1$, propagation and weighting give the following expected contribution of a bounded test function $f$,
\[
Q_nf(x):=\int q_n(x'\mid x,y)\omega_n(x,x')f(x')\,\dd x'
=\frac{1}{H_{n-1}(x)}\int M_n(x,\dd x')H_n(x')f(x').
\]
The equality follows by substituting the incremental weight, which cancels the proposal density.
For a probability law $\mu$ at step $n-1$, write $\mu(f):=\int f(x)\,\mu(\dd x)$.
Averaging these contributions and dividing by the total weight gives the exact updated expectation
\[
\Phi_n(\mu)(f):=\frac{\mu(Q_nf)}{\mu(Q_n\mathbf{1})}.
\]
Since $\beta_0=0$ and $H_0\equiv1$, the exact intermediate marginals start at $\eta_0=q_0$ and satisfy $\eta_n=\Phi_n(\eta_{n-1})$.
Step~1 identifies their terminal marginal as $\eta_N=\PosN$.

To describe the remaining updates from step $p$ to step $n$, let $M_{p+1:n}$ denote the composition of the prior kernels over those steps.
Repeated propagation and weighting compose the operators $Q_{p+1},\ldots,Q_n$.
As in Step~1, the intermediate twist factors cancel, giving
\[
Q_{p,n}f(x):=(Q_{p+1}\cdots Q_n f)(x)
=\frac{1}{H_p(x)}\int M_{p+1:n}(x,\dd z)H_n(z)f(z).
\]
Thus the exact continuation of a law $\mu$ from step $p$ to step $n$ is
\[
\Phi_{p,n}(\mu)(f):=\frac{\mu(Q_{p,n}f)}{\mu(Q_{p,n}\mathbf{1})},
\]
with $Q_{n,n}=I$ and $\Phi_{n,n}(\mu)=\mu$ when no updates remain.
In particular, $\Phi_{n-1,n}=\Phi_n$.

At the endpoint, $H_N=\Ly$ and $h_p(x)=\int M_{p+1:N}(x,\dd z)\Ly(z)$.
Normalizing this likelihood-weighted prior transition gives the posterior transition kernel
\[
P^y_{p,N}(x,\dd z)
=\frac{M_{p+1:N}(x,\dd z)\Ly(z)}{h_p(x)}.
\]
Consequently, the remaining weighted contribution factors as
\[
Q_{p,N}f(x)
=\frac{1}{H_p(x)}\int M_{p+1:N}(x,\dd z)\Ly(z)f(z)
=\frac{h_p(x)}{H_p(x)}P^y_{p,N}f(x),
\]
where $P^y_{p,N}f(x)$ is the conditional expectation of the endpoint function $f$ under this kernel.
Define the total remaining weight by $u_{p,N}:=Q_{p,N}\mathbf{1}=h_p/H_p$.
The exact continuation therefore first reweights the current law by $u_{p,N}$ and then applies the posterior transition,
\[
\Phi_{p,N}(\mu)(f)
=\frac{\mu(u_{p,N}P^y_{p,N}f)}{\mu(u_{p,N})}.
\]
Here $u_{N,N}\equiv1$ and $P^y_{N,N}=I$.
Assumption~\ref{ass:bounded-twist} gives $\ell_N/u_p\le u_{p,N}(x)\le u_N/\ell_p$, so its relative variation is $\sup u_{p,N}/\inf u_{p,N}=R_{p,N}<\infty$.
This factorization separates the weight correction and the posterior transition whose effects on error we bound in Step~5.

\paragraph{Step 4: bound the error introduced at one step.}
We now compare one particle update with the exact update from the same starting particles.
Write the weighted particle estimate of a test function as $\eta_n^K(f):=\sum_{i=1}^KW_n^{(i)}f(X_{t_n}^{(i)})$.
For $n\ge1$, define its comparison law by
\[
\tilde\eta_n^K:=\Phi_n(\eta_{n-1}^K),
\qquad
\tilde\eta_0^K:=\eta_0.
\]
The difference $\eta_n^K-\tilde\eta_n^K$ isolates the sampling error added at step $n$ while holding the previous particle approximation fixed.
Let $\cF_{n-1}^K$ be the sigma-field generated by the particle history through step $n-1$; this history determines the ESS resampling decision.
For every bounded $\cF_{n-1}^K$-measurable test function $f$, we will show
\begin{equation}\label{eq:app-local-error}
\EE\Big[\big|(\eta_n^K-\tilde\eta_n^K)(f)\big|^2\,\Big|\,\cF_{n-1}^K\Big]^{1/2}
\le\frac{\alpha_n}{\sqrt K}\,\osc(f),
\qquad n\in\{0,\ldots,N\},
\end{equation}
where $\alpha_n$ is deterministic and independent of $K$.
At $n=0$, the bound is unconditional with deterministic $f$.
The independent samples from $q_0$ have equal weights, so $\Var[\eta_0^K(f)]\le\osc(f)^2/(4K)$ and we can take $\alpha_0=1/2$.

For $n\ge1$, the first two conditional moments of an incremental weight are
\[
m_n(x):=\int q_n(x'\mid x,y)\omega_n(x,x')\,\dd x'=Q_n\mathbf{1}(x),
\qquad
v_n(x):=\int q_n(x'\mid x,y)\omega_n(x,x')^2\,\dd x'.
\]
Assumption~\ref{ass:bounded-twist} bounds the mean weight, and Step~2 bounds each realized weight by a finite constant $\bar\omega_n$.
These bounds imply
\[
\underline m_n:=\frac{\ell_n}{u_{n-1}}\le m_n(x)\le\frac{u_n}{\ell_{n-1}}=:\overline m_n,
\qquad
v_n(x)\le\bar\omega_nm_n(x)\le\kappa_nm_n(x)^2,
\]
where $\kappa_n:=\bar\omega_n/\underline m_n$ and $b_n:=\overline m_n/\underline m_n$.
We apply Lemma~\ref{lem:ratio} with $F_i:=f(X_{t_n}^{(i)})$, choosing $Z_i$ according to the resampling decision.

If resampling occurs, the reset weights are equal, so we can take $Z_i:=\omega_n(\bar X_{t_{n-1}}^{(i)},X_{t_n}^{(i)})$ and obtain $\hat\theta=\eta_n^K(f)$.
Stratified resampling uses independent uniforms on the $K$ strata, followed by independent proposal draws, making the pairs $(Z_i,F_i)$ independent conditional on $\cF_{n-1}^K$.
Its expected ancestor counts satisfy
\[
\sum_{i=1}^K\PP(A_{n-1}^{(i)}=j\mid\cF_{n-1}^K)=KW_{n-1}^{(j)}.
\]
Summing over the ancestors and proposal draws therefore gives
\begin{align*}
\theta
&=\frac{\sum_i\EE[Z_iF_i\mid\cF_{n-1}^K]}{\sum_i\EE[Z_i\mid\cF_{n-1}^K]}
=\frac{K\eta_{n-1}^K(Q_nf)}{K\eta_{n-1}^K(m_n)}
=\tilde\eta_n^K(f),\\
\frac{\sum_i\EE[Z_i^2\mid\cF_{n-1}^K]}{(\sum_i\EE[Z_i\mid\cF_{n-1}^K])^2}
&=\frac{\eta_{n-1}^K(v_n)}{K\eta_{n-1}^K(m_n)^2}
\le\frac{\kappa_nb_n^2}{K}.
\end{align*}

If resampling is skipped, we include the previous particle weights by setting
\[
Z_i:=W_{n-1}^{(i)}\omega_n(X_{t_{n-1}}^{(i)},X_{t_n}^{(i)}).
\]
The proposal draws again give conditionally independent pairs and $\hat\theta=\eta_n^K(f)$.
In this case,
\begin{align*}
\theta
&=\frac{\eta_{n-1}^K(Q_nf)}{\eta_{n-1}^K(m_n)}
=\tilde\eta_n^K(f),\\
\frac{\sum_i\EE[Z_i^2\mid\cF_{n-1}^K]}{(\sum_i\EE[Z_i\mid\cF_{n-1}^K])^2}
&=\frac{\sum_i(W_{n-1}^{(i)})^2v_n(X_{t_{n-1}}^{(i)})}{\eta_{n-1}^K(m_n)^2}
\le\kappa_nb_n^2\sum_i(W_{n-1}^{(i)})^2
\le\frac{\kappa_nb_n^2}{\rho K}.
\end{align*}
The last inequality follows because skipping resampling requires $\mathrm{ESS}_{n-1}\ge\rho K$.
Since $\rho\le1$, both cases satisfy the second-moment bound $\kappa_nb_n^2/(\rho K)$.
Lemma~\ref{lem:ratio}, applied conditionally on the particle history, proves Eq.~\eqref{eq:app-local-error} with $\alpha_n=b_n\sqrt{8\kappa_n/\rho}$.

\paragraph{Step 5: propagate the one-step errors to the endpoint.}
We finally express the terminal error as a sum of the one-step sampling errors, each followed by the remaining exact updates.
For $p\ge1$, the definition of $\tilde\eta_p^K$ gives
\[
\Phi_{p,N}(\tilde\eta_p^K)
=\Phi_{p,N}\big(\Phi_p(\eta_{p-1}^K)\big)
=\Phi_{p-1,N}(\eta_{p-1}^K).
\]
The second equality combines the exact update from step $p-1$ to step $p$ with the remaining exact updates.
When we sum the differences between the particle laws and their comparison laws, these consecutive terms cancel.
Using $\Phi_{N,N}(\eta_N^K)=\eta_N^K$ and $\Phi_{0,N}(\eta_0)=\eta_N$, we obtain
\[
\eta_N^K-\eta_N
=\sum_{p=0}^N\Big[\Phi_{p,N}(\eta_p^K)-\Phi_{p,N}(\tilde\eta_p^K)\Big].
\]

To apply Step~4 to each summand, we rewrite the propagated difference as a one-step error evaluated on a transformed test function.
Fix $p$ and write $\mu:=\eta_p^K$, $\nu:=\tilde\eta_p^K$, and $B:=\Phi_{p,N}(\nu)(\varphi)$.
The factorization in Step~3 gives
\begin{align*}
\Phi_{p,N}(\mu)(\varphi)-\Phi_{p,N}(\nu)(\varphi)
&=\frac{\mu(u_{p,N}P^y_{p,N}\varphi)}{\mu(u_{p,N})}-B\\
&=\frac{(\mu-\nu)\big(u_{p,N}(P^y_{p,N}\varphi-B)\big)}{\mu(u_{p,N})},
\end{align*}
where the last equality uses $\nu\big(u_{p,N}(P^y_{p,N}\varphi-B)\big)=0$.
Define
\[
D\varphi(x):=\frac{u_{p,N}(x)}{\nu(u_{p,N})}\big(P^y_{p,N}\varphi(x)-B\big).
\]
Then the propagated difference becomes
\[
\Phi_{p,N}(\mu)(\varphi)-\Phi_{p,N}(\nu)(\varphi)
=\frac{\nu(u_{p,N})}{\mu(u_{p,N})}(\mu-\nu)(D\varphi).
\]
Because $\nu=\tilde\eta_p^K$ depends only on the particle history through step $p-1$, the function $D\varphi$ is $\cF_{p-1}^K$-measurable and can be used in Eq.~\eqref{eq:app-local-error}.
At $p=0$, it is deterministic.

It remains to bound the prefactor and the oscillation of this test function.
The definition of $R_{p,N}$ gives
\[
\frac{\nu(u_{p,N})}{\mu(u_{p,N})}
\le\frac{\sup u_{p,N}}{\inf u_{p,N}}=R_{p,N}.
\]
Also, $B$ is a weighted average of $P^y_{p,N}\varphi$, so it lies between that function's infimum and supremum.
Since $u_{p,N}(x)/\nu(u_{p,N})\le R_{p,N}$, we have
\[
\osc(D\varphi)
\le2\sup_x|D\varphi(x)|
\le2R_{p,N}\osc(P^y_{p,N}\varphi)
\le2R_{p,N}\varrho(P^y_{p,N})\osc(\varphi).
\]
Applying the conditional bound in Eq.~\eqref{eq:app-local-error} and then taking expectations yields
\[
\Big\|\Phi_{p,N}(\eta_p^K)(\varphi)-\Phi_{p,N}(\tilde\eta_p^K)(\varphi)\Big\|_{L_2}
\le\frac{2\alpha_pR_{p,N}^2\varrho(P^y_{p,N})}{\sqrt K}\,\osc(\varphi).
\]
Finally, the triangle inequality in $L_2$ sums these propagated errors,
\[
\|\eta_N^K(\varphi)-\eta_N(\varphi)\|_{L_2}
\le\frac{2\,\osc(\varphi)}{\sqrt K}\sum_{p=0}^N\alpha_pR_{p,N}^2\varrho(P^y_{p,N}).
\]
Since $\eta_N=\PosN$ by Step~1, this is Eq.~\eqref{eq:finite-k} with the stated constant $C_N$.
\end{proof}

\subsection{Proof of Remark~\ref{rem:beta-star}}\label{app:proof-beta-star}

Under the assumptions of Remark~\ref{rem:beta-star}, $X_1| X_t=x\sim\cN(\mustar{1| t}(x),\sigma_{1| t}^2\I)$, so the exact twist is the Gaussian convolution
\[
h_t(x)=\int\cN(y;x_1,\sigma_y^2\I)\,\cN(x_1;\mustar{1\mid t}(x),\sigma_{1\mid t}^2\I)\,\dd x_1
=\cN\big(y;\mustar{1\mid t}(x),(\sigma_y^2+\sigma_{1\mid t}^2)\I\big).
\]
The plug-in twist is $\hh_t(x)=\cN(y;\mustar{1| t}(x),\sigma_y^2\I)$, so for any $\beta>0$,
\[
\hh_t(x)^{\beta}=(2\pi\sigma_y^2)^{-\beta d/2}\exp\Big(-\frac{\beta\,\|y-\mustar{1\mid t}(x)\|^2}{2\sigma_y^2}\Big).
\]
With $\beta=\beta^\star(t)=\sigma_y^2/(\sigma_y^2+\sigma_{1| t}^2)$, the exponent equals $-\|y-\mustar{1| t}(x)\|^2/\{2(\sigma_y^2+\sigma_{1| t}^2)\}$; hence $\hh_t(x)^{\beta^\star(t)}=c_t\,h_t(x)$ with $c_t=(2\pi\sigma_y^2)^{-\beta^\star(t)d/2}\{2\pi(\sigma_y^2+\sigma_{1| t}^2)\}^{d/2}$ independent of $x$.
Taking logarithms and gradients gives $\nabla\log\hh_t^{\beta^\star(t)}=\nabla\log h_t$, so with $w_g=1$ the guidance term in Eq.~\eqref{eq:guided-proposal} equals the exact $h$-transform drift correction $g(t)^2\nabla\log h_t$.
The factors $c_{t_n}/c_{t_{n-1}}$ multiply every incremental weight \eqref{eq:incremental-weight} by the same particle-independent constant and cancel after normalization, so the tempered plug-in twist and the exact twist induce the same sampler.

The result also extends to observing only a subset of the coordinates of $x_1$.
In the model $y=\A x_1+\eta$ with $\eta\sim\cN(0,\sigma_y^2\I)$, the matrix $\A$ selects the observed coordinates, such as the unmasked pixels of an image.
Since $\A\A^\top=\I$ in the observation space, the same computation gives $h_t(x)=\cN\big(y;\A\mustar{1| t}(x),(\sigma_y^2+\sigma_{1| t}^2)\I\big)$, so $\hh_t(x)^{\beta^\star(t)}\propto h_t(x)$ holds with the same $\beta^\star(t)$.

\section{Pretraining details}\label{app:pretraining}

This appendix provides the SB pretraining convention assumed by the main text.
We follow the simulation-free score-and-flow matching (SF2M) framework~\citep{tong2024simulation} but with a different parameterization, detailed below.

\paragraph{Score and flow matching.}
Under a Brownian reference with diffusion coefficient $\sigma$, the Schr\"odinger bridge's endpoint coupling is the entropic optimal transport (OT) coupling with squared Euclidean cost and regularization strength $2\sigma^2$.
For endpoints $(x_0,x_1)$ drawn from this coupling and $t\sim\mathrm{Unif}(0,1)$, the corresponding Brownian bridge sample is
\[
x_t=(1-t)x_0+tx_1+\sigma\sqrt{t(1-t)}\,\eps,
\qquad \eps\sim\cN(0,\I).
\]
Writing $m_t=(1-t)x_0+tx_1$, its conditional probability-flow velocity and score are
\[
u^\circ_t(x\mid x_0,x_1)
=x_1-x_0+\frac{1-2t}{2t(1-t)}(x-m_t),
\qquad
s^\circ_t(x\mid x_0,x_1)
=-\frac{x-m_t}{\sigma^2t(1-t)}.
\]
The standard SF2M objective jointly regresses these two targets~\citep{tong2024simulation},
\[
\EE\big[
\|\vthe(t,X_t)-u^\circ_t(X_t\mid X_0,X_1)\|^2
+w(t)^2\|\sthe(t,X_t)-s^\circ_t(X_t\mid X_0,X_1)\|^2
\big],
\]
where $w(t)>0$ weights the score loss and the expectation is over the pair $(X_0,X_1)$, time, and bridge noise.
In our implementation, we approximate the endpoint coupling using minibatch exact OT, following \citet{tong2024simulation}.

\paragraph{Displacement and noise parameterization.}
Holding $x_0$, $x_1$, and $\eps$ fixed, differentiating the interpolation in $t$ gives the conditional probability-flow velocity
\[
u^\circ_t(x_t\mid x_0,x_1)
=\underbrace{x_1-x_0}_{\text{displacement }D}
+\underbrace{\sigma\,\frac{1-2t}{2\sqrt{t(1-t)}}\,\eps}_{\text{noise correction term}},
\]
whose noise correction coefficient $\sigma(1-2t)/(2\sqrt{t(1-t)})$ diverges as $t\to 0$ or $t\to 1$.
Directly regressing this velocity would inherit this divergence, so we predict $D$ and $\eps$ separately and reconstruct the velocity $\vthe(t,x)$ analytically.
We instantiate a displacement channel $d_\theta(t,x)$ targeting $D=x_1-x_0$ and a noise channel $\eps_\theta(t,x)$ targeting $\eps$.
We consider a \emph{joint} model with a shared UNet backbone and two heads, and a \emph{dual} model that uses two independent UNets.
Both are trained with the same loss, and experiments use the joint version for efficiency unless noted.
The loss is
\[
\EE\left[
\|d_\theta(t,x_t)-(x_1-x_0)\|^2
+\lambda(t)\|\eps_\theta(t,x_t)-\eps\|^2
\right],
\]
where $\lambda(t)>0$ weights the noise regression and the expectation is over pairs $(x_0,x_1)$, times, and Gaussian noise.

\paragraph{Recovering velocity and score.}
The noise head is converted to the score convention used in the main text, $\sthe(t,x)\approx\nabla\log p_t(x)$.
Since $\nabla_{x_t}\log p(x_t| x_0,x_1)=-\eps/(\sigma\sqrt{t(1-t)})$, the noise head converts to the score as
\[
\sthe(t,x)
=-\frac{\eps_\theta(t,x)}{\sigma\sqrt{t(1-t)}}.
\]
The velocity is reconstructed from the displacement identity
\[
\EE[D\mid X_t=x]
=
\vstar_t(x)+\frac{1-2t}{2}\sigma^2\sstar_t(x).
\]
Thus the implemented velocity channel is
\[
\vthe(t,x)
=
d_\theta(t,x)-\frac{1-2t}{2}\sigma^2\sthe(t,x).
\]
With this convention, the endpoint predictor used by \ours{} is exactly Eq.~\eqref{eq:hmu1t}.
If an implementation stores the scaled score $s_{\mathrm{sc}}(t,x)=(\sigma^2/2)\sthe(t,x)$, then Eq.~\eqref{eq:hmu1t} is evaluated as
\[
\hmu{1\mid t}(x)
=
x+(1-t)\vthe(t,x)+(1-t)s_{\mathrm{sc}}(t,x).
\]

\section{Implementation details}\label{app:implementation}

\subsection{Two-dimensional toy benchmark}\label{app:toy-benchmark}

For the toy setup in Section~\ref{sec:exp:toy}, we use the independent coupling $X_0\sim q_0,\,X_1\sim q_1$ and the Brownian-bridge interpolant $X_t=(1-t)X_0+tX_1+\sigma\sqrt{t(1-t)}\,\epsilon$.
Under this interpolant the component marginal is $p_t(x| k)=\cN(x;t\mu_k,\sigma_t^2\I)$ with $\sigma_t^2=(1-t)^2+\sigma^2t(1-t)+t^2/4$, and the responsibilities are
\[
\gamma_k(t,x)=\frac{\pi_k\,\cN(x;t\mu_k,\sigma_t^2\I)}{\sum_j\pi_j\,\cN(x;t\mu_j,\sigma_t^2\I)}.
\]
The analytic score and probability-flow velocity used by \ours{}, DPS-SB, and IS are
\[
\sstar_t(x)=\sum_k\gamma_k(t,x)\,\frac{-(x-t\mu_k)}{\sigma_t^2},\qquad
\vstar_t(x)=\sum_k\gamma_k(t,x)\!\left[\mu_k+\frac{\dot{\sigma}_t^2}{2\sigma_t^2}(x-t\mu_k)\right],
\]
with $\dot{\sigma}_t^2=-2(1-t)+\sigma^2(1-2t)+t/2$.

FMPS and TFTF instead use the deterministic flow-matching path between the same distributions $q_0$ and $q_1$, whose component variance is $\tilde{\sigma}_t^2=(1-t)^2+t^2/4$ (with $\tilde{\gamma}_k$ defined by substituting $\tilde{\sigma}_t^2$ above) and whose velocity is
\[
v^{\rm FM}_t(x)=\sum_k\tilde{\gamma}_k(t,x)\!\left[\mu_k+\frac{t/4-(1-t)}{\tilde{\sigma}_t^2}(x-t\mu_k)\right].
\]

For TDS, we use the standard variance-preserving (VP) diffusion marginal with the same endpoint mixture $q_1$:
\[
X_t=a_tX_1+r_tZ,\qquad a_t=\sin(\pi t/2),\quad r_t=\cos(\pi t/2),\quad Z\sim\cN(0,\I).
\]
Thus, $p_t(x| k)=\cN(x;a_t\mu_k,V_t\I)$ with $V_t=a_t^2/4+r_t^2$, and
\[
\gamma_k^{\rm VP}(t,x)=
\frac{\pi_k\,\cN(x;a_t\mu_k,V_t\I)}
{\sum_j\pi_j\,\cN(x;a_t\mu_j,V_t\I)}.
\]
The analytic VP score used by TDS is
\[
s^{\rm VP}_t(x)
=\sum_k\gamma_k^{\rm VP}(t,x)\frac{-(x-a_t\mu_k)}{V_t}.
\]
where $\dot a_t=(\pi/2)r_t$, $\dot r_t=-(\pi/2)a_t$, and $\dot V_t=2(a_t\dot a_t/4+r_t\dot r_t)$.
The Tweedie endpoint predictor in TDS is
\[
\mu_{1\mid t}^{\rm VP}(x)
=\sum_k\gamma_k^{\rm VP}(t,x)\left[\mu_k+\frac{a_t/4}{V_t}(x-a_t\mu_k)\right].
\]

\subsection{Unconditional models and likelihoods}\label{app:pretrained-models}

The unconditional SB models used by \ours{}, DPS-SB, and IS are pretrained as described in Appendix~\ref{app:pretraining}, with target $q_1$ given by MNIST, CIFAR-10, or CelebA-HQ, source $q_0=\cN(0,\I)$ in pixel space, and Brownian reference $\sigma=1$ for MNIST and CIFAR-10 and $\sigma=0.25$ for CelebA-HQ.
For a fair comparison, FMPS~\citep{song2024fmps} and TFTF~\citep{qu2026tftf} use the same unconditional flow matching checkpoints as TFTF, rather than the velocity field learned by the SB.
We initialize the displacement head $d_\theta(t,x)$ in our SB pretraining from these checkpoints, so the compared methods have approximately matched network capacity.

\paragraph{Base models.}
For MNIST, the initialization is the pretrained U-Net used by TFTF~\citep{qu2026tftf}.
For CIFAR-10 and CelebA-HQ, the initialization is the pretrained rectified flow model of \citet{liu2023flow} from its public RectifiedFlow release.\footnote{\url{https://github.com/gnobitab/RectifiedFlow}}

Table~\ref{tab:parity} reports the parameter counts and the unconditional FID (10k samples against the training set) of the priors used by the compared methods.
The SB prior adds one score head to the rectified-flow architecture, so the parameter counts match within $0.1\%$, and its unconditional FID is slightly worse than that of the flow and diffusion checkpoints; the conditional gains of \ours{} therefore do not come from a larger or better base model.
The SB prior is initialized from the TFTF checkpoint and fine-tuned for $40{,}000$ steps with the additional score head, which takes about $5.5$ GPU hours on an RTX 5090 for CIFAR-10, a one-time overhead relative to using a pretrained flow or diffusion model directly.

\begin{table}[!htbp]
  \caption{Parameter count and unconditional FID (10k samples) of the pretrained priors.
  NA marks the absence of a released checkpoint.}
  \label{tab:parity}
  \centering
  \setlength{\tabcolsep}{4pt}
  \begin{adjustbox}{max width=\textwidth}
  \begin{tabular}{@{}lccc@{}}
    \toprule
    Prior model & MNIST params / FID & CIFAR-10 params / FID & CelebA-HQ params / FID \\
    \midrule
    Rectified flow (TFTF checkpoint) & 14.9777M / 5.22 & 61.8044M / 7.26 & 65.5745M / 48.40 \\
    DDPM / iDDPM (TDS checkpoint) & 14.4154M / 2.27 & 52.5464M / 7.33 & NA \\
    SB (ours) & 14.9782M / 6.53 & 61.8079M / 8.49 & 65.6160M / 51.54 \\
    \bottomrule
  \end{tabular}
  \end{adjustbox}
\end{table}

\paragraph{Likelihood models.}
For MNIST and CIFAR-10 class-conditional sampling, the likelihood classifier is a frozen ResNet-50~\citep{he2016deep} from the minimal-diffusion release.\footnote{\url{https://github.com/VSehwag/minimal-diffusion}}
For inpainting, super-resolution, and deblurring, we use the Gaussian likelihood $\Ly(x_1)=\cN(y;\A x_1,\sigma_y^2\I)$.
The observation operator $\A$ selects the observed pixels for inpainting, performs $4\times$ bicubic downsampling for super-resolution, and applies a $9\times9$ Gaussian blur with standard deviation $2$ for deblurring.
The observation noise standard deviation is $\sigma_y=0.05$ for MNIST inpainting, $0.1$ for CIFAR-10 inpainting, and $0.03$ for both CelebA-HQ restoration tasks.
For CelebA-HQ text-prompted image generation, the likelihood is computed with a ConvNext-XXLarge CLIP model from OpenCLIP~\citep{radford2021clip, liu2022convnet, cherti2023reproducible}.\footnote{\url{https://github.com/mlfoundations/open_clip}}
The likelihood takes the form $\Ly(x_1)\propto\exp\{\lambda\,\operatorname{sim}(\phi_I(x_1),\phi_T(y))\}$, where $\phi_I$ and $\phi_T$ are image and text embeddings, $\operatorname{sim}$ is cosine similarity, and $\lambda$ sets the energy scale.

\paragraph{External evaluators.}
For MNIST, external accuracy is measured by a separately trained Super Learner classifier.\footnote{\url{https://github.com/Curt-Park/handwritten_digit_recognition}}
For CIFAR-10, external accuracy is measured by a DenseNet-121 classifier~\citep{huang2017densely} from a public CIFAR-10 model release.\footnote{\url{https://github.com/huyvnphan/PyTorch_CIFAR10}}
For CelebA-HQ, external prompt alignment is measured by the attribute classifiers released with TransEditor~\citep{xu2022transeditor}.\footnote{\url{https://github.com/BillyXYB/TransEditor}}

\subsection{Hyperparameters and metrics}\label{app:hyperparameters}\label{app:additional-baseline-settings}

\paragraph{Toy example.}
\ours{} uses guidance strength $w_g=0.5$, bridge-shaped diffusion schedule $g(t)=c_g\sigma\sqrt{t(1-t)+\delta_g}$ with $c_g=1$, ESS threshold $\rho=0.8$, and systematic resampling activated from $t=0.1$ onward, with the small constants $\delta_g=\varepsilon=10^{-6}$. Guidance-baseline strengths are tuned by grid search: $r=0.2$ for FMPS and $\zeta=0.5$ for DPS-SB. TFTF uses the default three-phase schedule~\citep{qu2026tftf}.

\paragraph{Image experiments.}
\ours{} uses guidance strength $w_g=1$ for class-conditional sampling and text-prompted image generation and $w_g=2$ for inpainting, the same bridge-shaped schedule with $c_g=4$, and ESS threshold $\rho=0.5$; the remaining settings are same to the toy example. Guidance-baseline strengths are still tuned by grid search: $r=1$ for FMPS and $\zeta=200$ for DPS-SB on MNIST and CIFAR-10; $r=0.003$ for FMPS and $\zeta=50$ for DPS-SB on CelebA-HQ. 

\paragraph{CIFAR-10 inpainting baselines.}
DAPS and SITCOM use the unconditional iDDPM prior used by TDS; PnP-Flow uses the rectified-flow prior used by TFTF.
DAPS uses 200 annealing steps, five ODE substeps and 100 Langevin updates per annealing step, a Langevin learning rate of $5\times10^{-5}$, and $\tau=0.01$.
SITCOM uses 20 DDIM steps with up to 30 Adam iterations per step, a learning rate of $0.02$, a projection scale of $0.5$, and a noise-level parameter of $0.1$.
PnP-Flow uses 100 steps and a learning rate of $1.0$, with $\alpha=0.5$ for half-image masks and $\alpha=0.01$ for random $50\%$ masks.

\paragraph{CelebA-HQ restoration baselines.}
PnP-Flow, Flower, FlowDPS, TFTF, and FMPS share the rectified-flow prior in Table~\ref{tab:parity}, their native model class.
DAPS and SITCOM are excluded from this comparison because no compatible released CelebA-HQ diffusion checkpoint is available.
PnP-Flow uses 100 steps and a learning rate of $1.0$, with $\alpha=0.01$ for deblurring and $\alpha=0.3$ for super-resolution.
Flower uses 100 steps with up to 50 conjugate-gradient iterations per step.
The stochastic time progression is $x\leftarrow(t+\delta)x^\star+(1-t-\delta)z_0$, with $\delta=1/100$ and time grid $t=\delta i$ for $i=0,\ldots,99$.
We retain the original time parameterization and the $[-1,1]$ image convention without additional output rescaling.
FlowDPS uses 28 solver iterations, each with three data-consistency gradient updates of step size 15.
The baselines use their released configurations, with no additional tuning of learning rates or other hyperparameters on these two restoration tasks.

\paragraph{Metrics.}
All metrics are computed on endpoint samples after resampling under the final particle weights.
For class-conditional sampling, each metric is averaged over $K$ particles, $10$ classes, and $100$ replications per-class.
Density and Coverage \citep{naeem2020reliable} are computed per class between the generated samples and $1{,}000$ real training images of that class in Inception feature space with $k=5$ nearest neighbours, and averaged over classes. 
For text-prompted image generation, each metric is averaged over $K$ particles, $40$ prompts, and $5$ replications per-prompt.
LPIPS diversity is the mean pairwise LPIPS distance \citep{zhang2018unreasonable} among the samples generated for the same prompt.
For inpainting, we use half-image and random $50\%$ masks and evaluate SSIM on the full reconstruction.
For the CelebA-HQ inverse problems, PSNR and SSIM are averaged over $100$ test images and three independent runs per image, with standard deviations computed over all images and runs.

The main image experiments were conducted on an NVIDIA RTX 5090 GPU.

\section{Additional results}\label{app:additional}

\subsection{Computation cost}\label{app:cost}

Table~\ref{tab:cost} reports the number of forward and backward backbone evaluations (NFEs), the wall-clock time, and the peak GPU memory of one CIFAR-10 class-conditional run at $K=64$ and $N=800$, using the configurations of Table~\ref{tab:class-cond-k64}.
\ours{} obtains $(\vthe,\sthe)$ from one shared-backbone evaluation and performs one input-gradient backward pass per particle and time step; its wall-clock time and peak memory are comparable to those of the gradient-based baselines, and its $\cO(KN)$ cost is of the same order as that of TFTF, TDS, and FMPS.

\begin{table}[!htbp]
  \caption{Computation cost of one CIFAR-10 class-conditional run at $K=64$ and $N=800$: forward and backward backbone evaluations, wall-clock time, and peak GPU memory.}
  \label{tab:cost}
  \centering
  
  \setlength{\tabcolsep}{6pt}
  \begin{adjustbox}{max width=\textwidth}
  \begin{tabular}{@{}lcccc@{}}
    \toprule
    Method & forward NFE & backward NFE & time (s) & memory (GB) \\
    \midrule
    FMPS & 51{,}200 & 51{,}200 & 48.1 & 6.9 \\
    TFTF & 51{,}200 & 22{,}336 & 46.1 & 6.9 \\
    DPS-SB & 51{,}200 & 51{,}200 & 50.9 & 6.7 \\
    IS & 51{,}200 & 0 & 20.7 & 1.3 \\
    TDS & 51{,}200 & 51{,}200 & 45.3 & 7.0 \\
    \ours{} & 51{,}264 & 51{,}264 & 50.1 & 6.7 \\
    \bottomrule
  \end{tabular}
  \end{adjustbox}
\end{table}

\subsection{Joint particle and time-step allocation}\label{app:fixed-nfe-allocation}

We vary the particle count $K$ and time-step count $N$ jointly for \ours{} under total forward-plus-backward backbone evaluation (NFE) budgets $B\in\{25632, 102528\}$.
For each $N\in\{100,200,400,800\}$, we choose $K=\lfloor B/[2(N+1)]\rfloor$, the largest integer particle count with $\mathrm{NFE}=2(N+1)K\le B$; the actual counts are reported in Table~\ref{tab:fixed-nfe-allocation}.

Figure~\ref{fig:fixed-nfe-grid} and Table~\ref{tab:fixed-nfe-allocation} report the quality metrics and computational costs, respectively.
We evaluate the $K$ endpoint samples after systematic resampling from the terminal weights.
The results show that the allocation between $K$ and $N$ affects quality under a fixed computational budget, and a relatively large time-step count $N$ (e.g., $N=400$) can be beneficial.

\begin{figure}[!htbp]
  \centering
  \includegraphics[width=\linewidth]{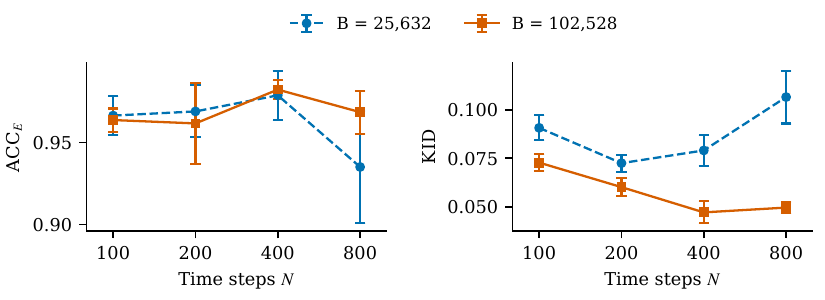}
  \caption{CIFAR-10 quality under two total backbone-NFE ceilings $B$.
Points and error bars show the mean and sample standard deviation based on ten classes and five repetitions per-class.
Higher ACC$_E$ and lower KID are preferred.}
  \label{fig:fixed-nfe-grid}
\end{figure}

\begin{table}[!htbp]
  \centering
  \caption{Particle allocation and sampling costs.
Memory gives mean peak allocation.}
  \label{tab:fixed-nfe-allocation}
  \setlength{\tabcolsep}{6pt}
  \begin{adjustbox}{max width=\textwidth}
  \begin{tabular}{@{}rrrcc@{}}
    \toprule
    $N$ & $K$ & Actual NFE & Time (s) & Memory (GiB) \\
    \midrule
    \multicolumn{5}{c}{$B=25{,}632$} \\
    \midrule
    $100$ & $126$ & $25{,}452$ & $45.81_{\pm0.07}$ & $6.708$ \\
    $200$ & $63$ & $25{,}326$ & $44.49_{\pm0.19}$ & $6.678$ \\
    $400$ & $31$ & $24{,}862$ & $42.16_{\pm0.07}$ & $3.468$ \\
    $800$ & $16$ & $25{,}632$ & $80.11_{\pm0.02}$ & $2.175$ \\
    \midrule
    \multicolumn{5}{c}{$B=102{,}528$} \\
    \midrule
    $100$ & $507$ & $102{,}414$ & $177.80_{\pm0.20}$ & $6.782$ \\
    $200$ & $255$ & $102{,}510$ & $189.03_{\pm20.72}$ & $6.732$ \\
    $400$ & $127$ & $101{,}854$ & $183.21_{\pm0.18}$ & $6.708$ \\
    $800$ & $64$ & $102{,}528$ & $178.78_{\pm0.28}$ & $6.701$ \\
    \bottomrule
  \end{tabular}
  \end{adjustbox}
\end{table}

\subsection{Qualitative samples}\label{app:inpainting-mean}

Figure~\ref{fig:inpainting-mnist-compare} compares MNIST half-mask inpainting, and Figure~\ref{fig:celebahq-recon} shows \ours{} reconstructions for the super-resolution and deblurring tasks of Table~\ref{tab:solvers}.

\begin{figure}[!t]
  \centering
  \includegraphics[width=0.55\linewidth]{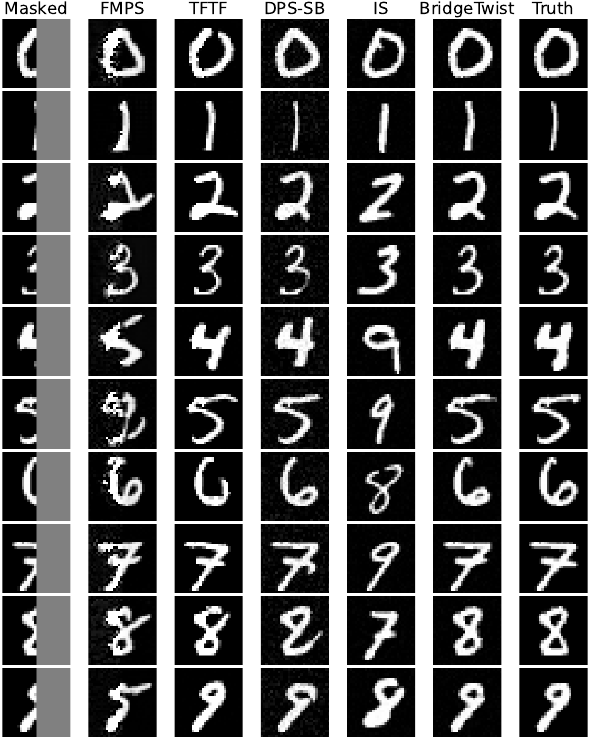}
  \caption{MNIST half-mask inpainting comparison. Each row shows one digit example; columns show the masked input, FMPS, TFTF, DPS-SB, IS, \ours{}, and the ground truth.}
  \label{fig:inpainting-mnist-compare}
\end{figure}

\begin{figure}[!t]
  \centering
  \setlength{\tabcolsep}{1.5pt}
  \renewcommand{\arraystretch}{0.5}
  \begin{tabular}{@{}c@{\hspace{4pt}}cccc@{\hspace{6pt}}ccc@{}}
    & {\scriptsize Input} & {\scriptsize \ours{}} & {\scriptsize Ground truth} & & {\scriptsize Input} & {\scriptsize \ours{}} & {\scriptsize Ground truth} \\
    \raisebox{0.06\textwidth}{\rotatebox{90}{\scriptsize $4\times$ SR}} &
    \includegraphics[width=0.145\textwidth]{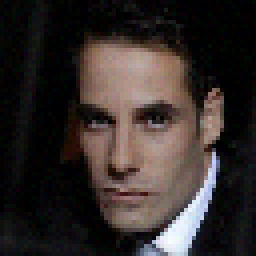} &
    \includegraphics[width=0.145\textwidth]{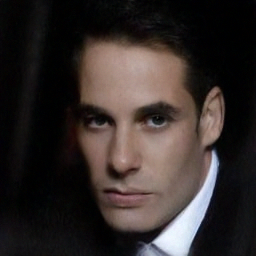} &
    \includegraphics[width=0.145\textwidth]{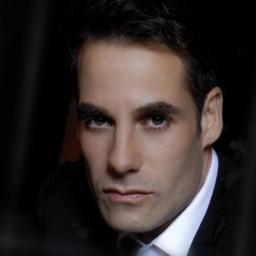} & &
    \includegraphics[width=0.145\textwidth]{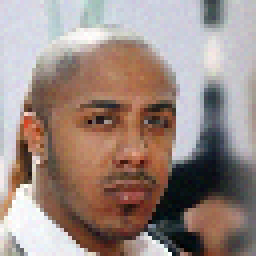} &
    \includegraphics[width=0.145\textwidth]{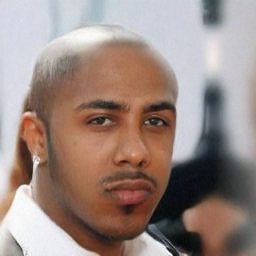} &
    \includegraphics[width=0.145\textwidth]{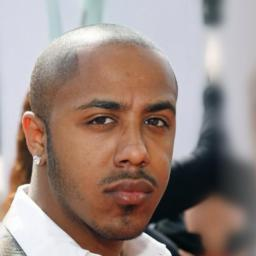} \\
    \raisebox{0.05\textwidth}{\rotatebox{90}{\scriptsize Deblurring}} &
    \includegraphics[width=0.145\textwidth]{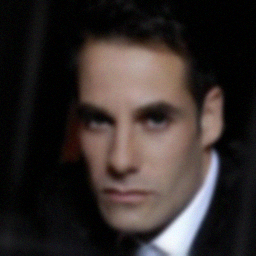} &
    \includegraphics[width=0.145\textwidth]{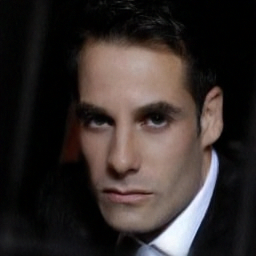} &
    \includegraphics[width=0.145\textwidth]{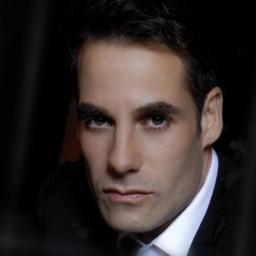} & &
    \includegraphics[width=0.145\textwidth]{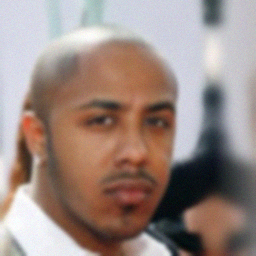} &
    \includegraphics[width=0.145\textwidth]{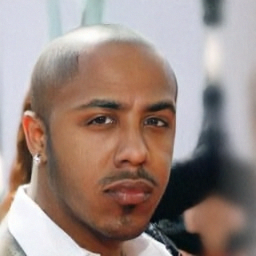} &
    \includegraphics[width=0.145\textwidth]{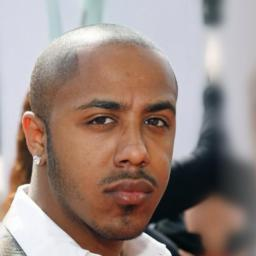} \\
  \end{tabular}
  \caption{CelebA-HQ reconstructions from \ours{} for $4\times$ super-resolution (top) and Gaussian deblurring (bottom).
Each triplet shows the observation, the \ours{} reconstruction, and the ground truth.}
  \label{fig:celebahq-recon}
\end{figure}

\subsection{Expanded CIFAR-10 inpainting comparison}\label{app:expanded-inpainting}

Table~\ref{tab:expanded-inpainting} adds DAPS, SITCOM, and TDS to the CIFAR-10 inpainting comparison.
The methods use the prior families and settings in Appendix~\ref{app:additional-baseline-settings}.
\ours{} attains the highest SSIM for both masks.

\begin{table}[!htbp]
  \centering
  \caption{CIFAR-10 inpainting quality and cost.
Time is per image.}
  \label{tab:expanded-inpainting}
  \setlength{\tabcolsep}{6pt}
  \begin{adjustbox}{max width=\textwidth}
  \begin{tabular}{@{}lcccc@{}}
    \toprule
    Method & Time (s) & Peak memory (GB) & SSIM, half $\uparrow$ & SSIM, random $\uparrow$ \\
    \midrule
    DAPS & $20.21$ & $0.57$ & $0.626$ & $0.908$ \\
    SITCOM & $3.82$ & $0.71$ & $0.645$ & $0.907$ \\
    TDS & $2.26$ & $1.78$ & $0.630$ & $0.927$ \\
    PnP-Flow & $5.08$ & $0.60$ & $0.623$ & $0.940$ \\
    FMPS & $2.27$ & $1.64$ & $0.240$ & $0.554$ \\
    TFTF & $2.56$ & $1.86$ & $0.532$ & $0.875$ \\
    DPS-SB & $2.44$ & $1.59$ & $0.518$ & $0.782$ \\
    \ours{} & $2.42$ & $1.59$ & $\mathbf{0.655}$ & $\mathbf{0.943}$ \\
    \bottomrule
  \end{tabular}
  \end{adjustbox}
\end{table}

\subsection{Calibration of the endpoint predictor}\label{app:calibration}

We measure the gap between the plug-in predictor $\hmu{1| t}$ and $\EE[X_1| X_t]$ along the trajectory.
For a state $x_t$, we estimate $\EE[X_1| X_t=x_t]$ by the Monte Carlo average $\bar\mu_{\mathrm{MC}}$ of $128$ suffix trajectories of the prior dynamics started at $(t,x_t)$ and report $\mathrm{err}^2(\hmu{1| t})=\|\hmu{1| t}(x_t)-\bar\mu_{\mathrm{MC}}\|^2/d$.
Table~\ref{tab:calibration} shows that the error is largest near $t=0$ and decays with $t$.
Writing $b^\star$ for the SB drift and $b_{\theta,g}$ for the sampler drift, the gap has the form
\[
\hmu{1\mid t}(x)-\EE[X_1\mid X_t=x]
=\EE\Big[\int_t^1\big(b^\star_t(x)-b_{\theta,g}(u,X_u)\big)\,\dd u\;\Big|\;X_t=x\Big],
\]
so as $t\to1$, the interval $(t,1)$ shrinks and $X_u$ stays near $x$ within it, and the integrand and the range of integration vanish together.

\begin{table}[!htbp]
  \caption{Calibration error $\mathrm{err}^2(\hmu{1| t})$ of the endpoint predictor against Monte Carlo estimates of $\EE[X_1| X_t]$ at different times $t$.}
  \label{tab:calibration}
  \centering
  
  \setlength{\tabcolsep}{6pt}
  \begin{adjustbox}{max width=\textwidth}
  \begin{tabular}{@{}lccccccc@{}}
    \toprule
    $t$ & 0.05 & 0.1 & 0.3 & 0.5 & 0.7 & 0.9 & 0.95 \\
    \midrule
    Toy example & 0.5572 & 0.4587 & 0.1303 & 0.0193 & 0.0034 & 0.00066 & 0.00031 \\
    CIFAR-10 & 0.00334 & 0.00164 & 0.00129 & 0.00157 & 0.00072 & 0.00044 & 0.00052 \\
    \bottomrule
  \end{tabular}
  \end{adjustbox}
\end{table}

\subsection{SMC diagnostics}\label{app:ess}

We record the ESS and resampling trajectories for CIFAR-10 class-conditional sampling and CelebA-HQ text-prompted image generation in Figures~\ref{fig:full-ess-cifar} and~\ref{fig:full-ess-celeba}, respectively.
The ESS threshold is $0.5$; \ours{} enables resampling after the first $10\%$ of the path, while TFTF resamples only in its stochastic phase.
IS has uniform intermediate weights and weights samples only at the endpoint, so its intermediate $\mathrm{ESS}/K$ is one by construction.

Table~\ref{tab:ess} summarizes for both datasets.
On CIFAR-10, \ours{} maintains a comparable time-averaged ESS to TDS with substantially fewer resampling events.
On CelebA-HQ, \ours{} has a higher terminal ESS than IS and TFTF, while TFTF has a higher time-averaged ESS and fewer resampling events than \ours{}.

\begin{figure}[!t]
  \centering
  \includegraphics[width=\linewidth]{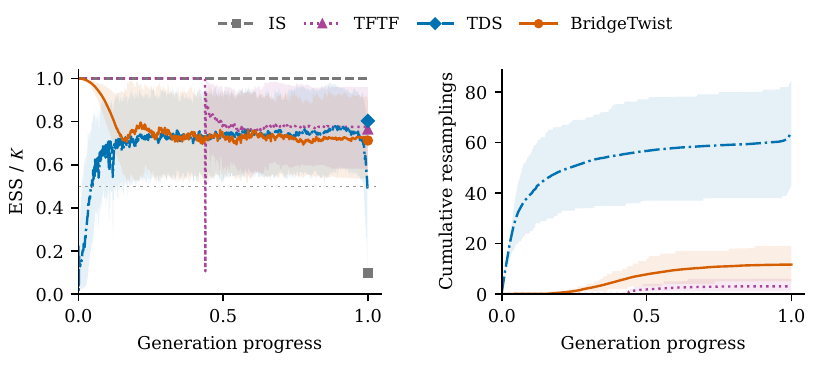}
  \caption{CIFAR-10 ESS and resampling trajectories at $K=64$ and $N=800$.
Curves are means over ten classes and ten repetitions per class; shading gives the 10th--90th percentile range across the 100 runs.
Left: pre-resampling $\mathrm{ESS}/K$, with the $0.5$ threshold and separate markers for terminal ESS after the final weight update.
Right: cumulative counts of resampling events.}
  \label{fig:full-ess-cifar}
\end{figure}

\begin{figure}[!t]
  \centering
  \includegraphics[width=\linewidth]{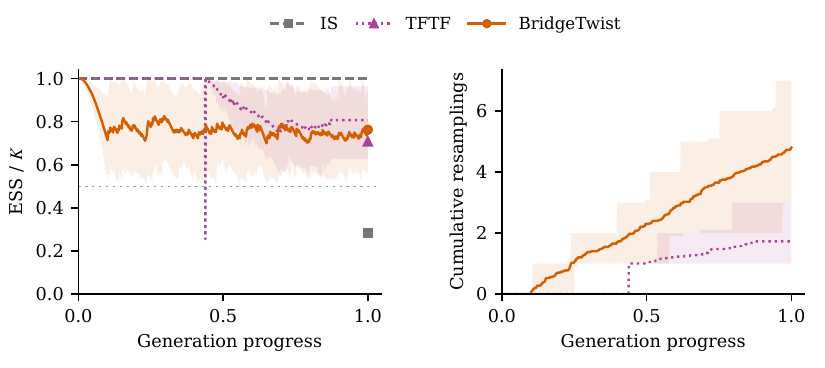}
  \caption{CelebA-HQ ESS and resampling trajectories at $K=25$ and $N=800$.
Curves are means over 40 attribute prompts with one run per prompt, and shading gives the 10th--90th percentile range across these prompt trials.
Left: pre-resampling $\mathrm{ESS}/K$, with the $0.5$ threshold and separate terminal ESS markers.
Right: cumulative counts of resampling events.}
  \label{fig:full-ess-celeba}
\end{figure}

\begin{table}[!htbp]
  \caption{SMC diagnostics along the full trajectories. A dash marks the absence of a compatible checkpoint.}
  \label{tab:ess}
  \centering
  \setlength{\tabcolsep}{3pt}
  \begin{adjustbox}{max width=\textwidth}
  \begin{tabular}{@{}lcccccc@{}}
    \toprule
    & \multicolumn{3}{c}{CIFAR-10} & \multicolumn{3}{c}{CelebA-HQ} \\
    \cmidrule(lr){2-4}\cmidrule(l){5-7}
    Method & step-avg.\ ESS/$K$ & resamplings & final ESS/$K$ & step-avg.\ ESS/$K$ & resamplings & final ESS/$K$ \\
    \midrule
    IS & $1.000$ & $0$ & $0.096$ & $1.000$ & $0$ & $0.282$ \\
    TFTF & $0.875$ & $3.05$ & $0.763$ & $0.902$ & $1.73$ & $0.708$ \\
    TDS & $0.705$ & $64.25$ & $0.803$ & -- & -- & -- \\
    \ours{} & $0.756$ & $11.67$ & $0.712$ & $0.770$ & $4.80$ & $0.762$ \\
    \bottomrule
  \end{tabular}
  \end{adjustbox}
\end{table}

\subsection{Tempering in TDS and TFTF}\label{app:tempering-transfer}

We apply our linear tempering schedule $\beta(t)=t$ to the twists in TDS and TFTF, retaining each method's prior and guided-proposal construction.
Table~\ref{tab:tempering-transfer} reports the comparisons on CIFAR-10.
All runs use $K=64$ particles and $N=800$ steps.
Tempering improves TDS, consistent with correcting its plug-in over-confidence, and it slightly reduces TFTF's accuracy, consistent with TFTF's early unconditional phase already suppressing guidance. 
TDS with tempering remains below \ours{}, which additionally uses an SB-specific bridge-shaped inference schedule, a flexibility with no counterpart under TDS.

\begin{table}[!htbp]
  \centering
  \caption{TDS, TFTF, their linearly tempered variants, and \ours{} on CIFAR-10.}
  \label{tab:tempering-transfer}
  \begin{adjustbox}{max width=\textwidth}
  \begin{tabular}{@{}lccc@{}}
    \toprule
    Method & ACC$_L$ $\uparrow$ & ACC$_E$ $\uparrow$ & KID $\downarrow$ \\
    \midrule
    TDS & $0.947$ & $0.869$ & $0.129$ \\
    TDS + tempering & $0.995$ & $0.951$ & $0.068$ \\
    TFTF & $0.986$ & $0.934$ & $0.057$ \\
    TFTF + tempering & $0.984$ & $0.923$ & $0.051$ \\
    \ours{} & $\mathbf{0.997}$ & $\mathbf{0.979}$ & $\mathbf{0.040}$ \\
    \bottomrule
  \end{tabular}
  \end{adjustbox}
\end{table}

\subsection{Second-order twist extension}\label{app:second-order-twist}

The first-order twist evaluates the likelihood at the conditional endpoint mean estimate and discards conditional endpoint uncertainty.
A second-order approximation represents the conditional endpoint law by a Gaussian with mean $\hmu{1| t}(x)$ and covariance $\widehat\Sigma_{1| t}(x)$.
For a linear Gaussian observation $y=\A x_1+\eta$, where $y\in\mathbb R^m$, $x_1\in\mathbb R^d$, $\A\in\mathbb R^{m\times d}$, and $\eta\sim\cN(0,R)$ with $R\succ0$, integrating the likelihood gives
\[
  \hh_t^{(2)}(x)
  =\int \cN(y;\A z,R)\,
    \cN(z;\hmu{1\mid t}(x),\widehat\Sigma_{1\mid t}(x))\,\mathrm dz
  =\cN(y;\A\hmu{1\mid t}(x),
    R+\A\widehat\Sigma_{1\mid t}(x)\A^\top).
\]
Thus, endpoint uncertainty increases the covariance in the predictive likelihood.

For the two-dimensional toy in Appendix~\ref{app:toy-benchmark}, $\A=\I$ and $R=\sigma_y^2\I$.
We compare the first-order twist with second-order twists using either the exact conditional covariance or a Jacobian-based covariance estimate.
For the estimated variant, let $\mustar{M| t}(x)=x+\TT_t^2\sstar_t(x)$ denote $\EE[M_t| X_t=x]$, where $\TT_t^2=\sigma^2t(1-t)$.
We use
\begin{align*}
  \Sigma_{M\mid t}(x)
  &:=\Cov(M_t\mid X_t=x)
    =\TT_t^2\big[\I+\TT_t^2\nabla_x\sstar_t(x)\big],\\
  G_t(x)
  &:=\I+(1-t)\bigg[\nabla_z\vstar_t(z)
    +\frac{\sigma^2}{2}(1-2t)\nabla_z\sstar_t(z)\bigg]_{z=\mustar{M\mid t}(x)},\\
  \widehat\Sigma_{1\mid t}(x)
  &:=G_t(x)\Sigma_{M\mid t}(x)G_t(x)^\top.
\end{align*}
All Jacobians are evaluated analytically in this two-dimensional experiment.

Table~\ref{tab:second-order-twist} reports the comparison at $K=2000$ and $N=100$, with 50 independent repetitions per observation-noise level.
Both second-order variants improve the results at the tighter observation-noise levels, while the differences are small at $\sigma_y\in\{0.5,1.0\}$.
Constructing the full covariance in high-dimensional image spaces requires additional Jacobian operations, so the image experiments retain the first-order twist.

\begin{table}[!htbp]
  \centering
  \caption{First- and second-order twists on the two-dimensional toy.
Entries are the square root of the mean MMD$^2$, with smaller values preferred.}
  \label{tab:second-order-twist}
  \begin{adjustbox}{max width=\textwidth}
  \begin{tabular}{@{}lccccc@{}}
    \toprule
    Twist / $\sigma_y$ & $1.0$ & $0.5$ & $0.3$ & $0.1$ & $0.05$ \\
    \midrule
    First-order & $0.032$ & $0.033$ & $0.134$ & $0.062$ & $0.059$ \\
    Second-order, exact covariance & $0.032$ & $0.031$ & $0.037$ & $0.025$ & $0.016$ \\
    Second-order, estimated covariance & $0.033$ & $0.031$ & $0.037$ & $0.028$ & $0.020$ \\
    \bottomrule
  \end{tabular}
  \end{adjustbox}
\end{table}

\subsection{Conditional transition with a non-Gaussian source}\label{app:scrna}

To illustrate the applicability of \ours{} to non-Gaussian sources, we evaluate \ours{} on the PHATE embryoid-body scRNA-seq data \citep{moon2019visualizing} (Mendeley v6n743h5ng).
We transport an early capture block (T4\_5C) to a late block (T6\_7D) while conditioning on a high proliferation-marker score at the endpoint, using $2048$ cells per block, $30$ principal components of $2000$ highly variable genes, $K=2048$, and $N=150$.
We use the posterior proxy inferred from the official Waddington-OT transport map \citep{schiebinger2019optimal} as the reference, which is independent of all samplers.

To target cells with high proliferation-marker expression, we construct a single score from the expression of MKI67, TOP2A, PCNA, CENPF, TYMS, and UBE2C.
We condition on a noisy observation of this score at its 95th percentile under the Waddington-OT descendant distribution.

Table~\ref{tab:scrna} reports the energy distance and the sliced Wasserstein-2 distance to the reference; \ours{} attains the lowest values of both, and TFTF degrades severely.
This stochastic transport from a non-Gaussian source illustrates an SB-specific use case for \ours{}.

\begin{table}[!htbp]
  \caption{Conditional transition on PHATE scRNA-seq data with a non-Gaussian source. Bold marks the best value in each column.}
  \label{tab:scrna}
  \centering
  
  \setlength{\tabcolsep}{8pt}
  \begin{adjustbox}{max width=\textwidth}
  \begin{tabular}{@{}lcc@{}}
    \toprule
    Method & Energy distance $\downarrow$ & Sliced $W_2$ $\downarrow$ \\
    \midrule
    FMPS & 0.0650 & 0.0599 \\
    TFTF & 3.1958 & 0.4919 \\
    DPS-SB & 0.0463 & 0.0550 \\
    IS & 0.0701 & 0.0610 \\
    \ours{} & \textbf{0.0374} & \textbf{0.0470} \\
    \bottomrule
  \end{tabular}
  \end{adjustbox}
\end{table}

\clearpage
\section*{NeurIPS Paper Checklist}

\begin{enumerate}

\item {\bf Claims}
    \item[] Question: Do the main claims made in the abstract and introduction accurately reflect the paper's contributions and scope?
    \item[] Answer: \answerYes{}.
    \item[] Justification: The claims are stated in the abstract and Section~\ref{sec:intro}; the theoretical scope is specified in Section~\ref{sec:method:theory}, and limitations are discussed in Section~\ref{sec:discussion}.
    \item[] Guidelines:
    \begin{itemize}
        \item The answer \answerNA{} means that the abstract and introduction do not include the claims made in the paper.
        \item The abstract and/or introduction should clearly state the claims made, including the contributions made in the paper and important assumptions and limitations. A \answerNo{} or \answerNA{} answer to this question will not be perceived well by the reviewers. 
        \item The claims made should match theoretical and experimental results, and reflect how much the results can be expected to generalize to other settings. 
        \item It is fine to include aspirational goals as motivation as long as it is clear that these goals are not attained by the paper. 
    \end{itemize}

\item {\bf Limitations}
    \item[] Question: Does the paper discuss the limitations of the work performed by the authors?
    \item[] Answer: \answerYes{}.
    \item[] Justification: Section~\ref{sec:discussion} discusses the restriction to a Brownian reference, limitations of the plug-in twist, errors from prior learning and discretization, and the additional pretraining cost.
    \item[] Guidelines:
    \begin{itemize}
        \item The answer \answerNA{} means that the paper has no limitation while the answer \answerNo{} means that the paper has limitations, but those are not discussed in the paper. 
        \item The authors are encouraged to create a separate ``Limitations'' section in their paper.
        \item The paper should point out any strong assumptions and how robust the results are to violations of these assumptions (e.g., independence assumptions, noiseless settings, model well-specification, asymptotic approximations only holding locally). The authors should reflect on how these assumptions might be violated in practice and what the implications would be.
        \item The authors should reflect on the scope of the claims made, e.g., if the approach was only tested on a few datasets or with a few runs. In general, empirical results often depend on implicit assumptions, which should be articulated.
        \item The authors should reflect on the factors that influence the performance of the approach. For example, a facial recognition algorithm may perform poorly when image resolution is low or images are taken in low lighting. Or a speech-to-text system might not be used reliably to provide closed captions for online lectures because it fails to handle technical jargon.
        \item The authors should discuss the computational efficiency of the proposed algorithms and how they scale with dataset size.
        \item If applicable, the authors should discuss possible limitations of their approach to address problems of privacy and fairness.
        \item While the authors might fear that complete honesty about limitations might be used by reviewers as grounds for rejection, a worse outcome might be that reviewers discover limitations that aren't acknowledged in the paper. The authors should use their best judgment and recognize that individual actions in favor of transparency play an important role in developing norms that preserve the integrity of the community. Reviewers will be specifically instructed to not penalize honesty concerning limitations.
    \end{itemize}

\item {\bf Theory assumptions and proofs}
    \item[] Question: For each theoretical result, does the paper provide the full set of assumptions and a complete (and correct) proof?
    \item[] Answer: \answerYes{}.
    \item[] Justification: Appendix~\ref{app:proofs} gives the assumptions and proofs for Propositions~\ref{thm:inversion} and~\ref{prop:fokker-planck}, Theorem~\ref{thm:exactness}, and Remark~\ref{rem:beta-star}.
    \item[] Guidelines:
    \begin{itemize}
        \item The answer \answerNA{} means that the paper does not include theoretical results. 
        \item All the theorems, formulas, and proofs in the paper should be numbered and cross-referenced.
        \item All assumptions should be clearly stated or referenced in the statement of any theorems.
        \item The proofs can either appear in the main paper or the supplemental material, but if they appear in the supplemental material, the authors are encouraged to provide a short proof sketch to provide intuition. 
        \item Inversely, any informal proof provided in the core of the paper should be complemented by formal proofs provided in appendix or supplemental material.
        \item Theorems and Lemmas that the proof relies upon should be properly referenced. 
    \end{itemize}

    \item {\bf Experimental result reproducibility}
    \item[] Question: Does the paper fully disclose all the information needed to reproduce the main experimental results of the paper to the extent that it affects the main claims and/or conclusions of the paper (regardless of whether the code and data are provided or not)?
    \item[] Answer: \answerYes{}.
    \item[] Justification: Section~\ref{sec:experiments} specifies tasks, baselines, particle counts, sampler steps, and metrics; Appendix~\ref{app:implementation} additionally provides analytic fields for the toy benchmark, the unconditional pretraining setup, conditional likelihoods, hyperparameters and settings, and compute resources.
    \item[] Guidelines:
    \begin{itemize}
        \item The answer \answerNA{} means that the paper does not include experiments.
        \item If the paper includes experiments, a \answerNo{} answer to this question will not be perceived well by the reviewers: Making the paper reproducible is important, regardless of whether the code and data are provided or not.
        \item If the contribution is a dataset and\slash or model, the authors should describe the steps taken to make their results reproducible or verifiable. 
        \item Depending on the contribution, reproducibility can be accomplished in various ways. For example, if the contribution is a novel architecture, describing the architecture fully might suffice, or if the contribution is a specific model and empirical evaluation, it may be necessary to either make it possible for others to replicate the model with the same dataset, or provide access to the model. In general. releasing code and data is often one good way to accomplish this, but reproducibility can also be provided via detailed instructions for how to replicate the results, access to a hosted model (e.g., in the case of a large language model), releasing of a model checkpoint, or other means that are appropriate to the research performed.
        \item While NeurIPS does not require releasing code, the conference does require all submissions to provide some reasonable avenue for reproducibility, which may depend on the nature of the contribution. For example
        \begin{enumerate}
            \item If the contribution is primarily a new algorithm, the paper should make it clear how to reproduce that algorithm.
            \item If the contribution is primarily a new model architecture, the paper should describe the architecture clearly and fully.
            \item If the contribution is a new model (e.g., a large language model), then there should either be a way to access this model for reproducing the results or a way to reproduce the model (e.g., with an open-source dataset or instructions for how to construct the dataset).
            \item We recognize that reproducibility may be tricky in some cases, in which case authors are welcome to describe the particular way they provide for reproducibility. In the case of closed-source models, it may be that access to the model is limited in some way (e.g., to registered users), but it should be possible for other researchers to have some path to reproducing or verifying the results.
        \end{enumerate}
    \end{itemize}

\item {\bf Open access to data and code}
    \item[] Question: Does the paper provide open access to the data and code, with sufficient instructions to faithfully reproduce the main experimental results, as described in supplemental material?
    \item[] Answer: \answerYes{}.
    \item[] Justification: A code release with pretrained checkpoints and scripts to reproduce the main experimental results is provided in the supplementary material.
    \item[] Guidelines:
    \begin{itemize}
        \item The answer \answerNA{} means that paper does not include experiments requiring code.
        \item Please see the NeurIPS code and data submission guidelines (\url{https://neurips.cc/public/guides/CodeSubmissionPolicy}) for more details.
        \item While we encourage the release of code and data, we understand that this might not be possible, so \answerNo{} is an acceptable answer. Papers cannot be rejected simply for not including code, unless this is central to the contribution (e.g., for a new open-source benchmark).
        \item The instructions should contain the exact command and environment needed to run to reproduce the results. See the NeurIPS code and data submission guidelines (\url{https://neurips.cc/public/guides/CodeSubmissionPolicy}) for more details.
        \item The authors should provide instructions on data access and preparation, including how to access the raw data, preprocessed data, intermediate data, and generated data, etc.
        \item The authors should provide scripts to reproduce all experimental results for the new proposed method and baselines. If only a subset of experiments are reproducible, they should state which ones are omitted from the script and why.
        \item At submission time, to preserve anonymity, the authors should release anonymized versions (if applicable).
        \item Providing as much information as possible in supplemental material (appended to the paper) is recommended, but including URLs to data and code is permitted.
    \end{itemize}

\item {\bf Experimental setting/details}
    \item[] Question: Does the paper specify all the training and test details (e.g., data splits, hyperparameters, how they were chosen, type of optimizer) necessary to understand the results?
    \item[] Answer: \answerYes{}.
    \item[] Justification: Section~\ref{sec:experiments} specifies the tasks, baselines, particle counts, sampler steps, and evaluation metrics.
    Appendix~\ref{app:pretraining} describes the pretraining parameterization; Appendix~\ref{app:implementation} gives the pretrained models, likelihoods, sampler hyperparameters, baseline settings, and metric averaging conventions.
    \item[] Guidelines:
    \begin{itemize}
        \item The answer \answerNA{} means that the paper does not include experiments.
        \item The experimental setting should be presented in the core of the paper to a level of detail that is necessary to appreciate the results and make sense of them.
        \item The full details can be provided either with the code, in appendix, or as supplemental material.
    \end{itemize}

\item {\bf Experiment statistical significance}
    \item[] Question: Does the paper report error bars suitably and correctly defined or other appropriate information about the statistical significance of the experiments?
    \item[] Answer: \answerYes{}.
    \item[] Justification: Section~\ref{sec:experiments} reports the standard deviations of metrics based on multiple independent replications.
    \item[] Guidelines:
    \begin{itemize}
        \item The answer \answerNA{} means that the paper does not include experiments.
        \item The authors should answer \answerYes{} if the results are accompanied by error bars, confidence intervals, or statistical significance tests, at least for the experiments that support the main claims of the paper.
        \item The factors of variability that the error bars are capturing should be clearly stated (for example, train/test split, initialization, random drawing of some parameter, or overall run with given experimental conditions).
        \item The method for calculating the error bars should be explained (closed form formula, call to a library function, bootstrap, etc.)
        \item The assumptions made should be given (e.g., Normally distributed errors).
        \item It should be clear whether the error bar is the standard deviation or the standard error of the mean.
        \item It is OK to report 1-sigma error bars, but one should state it. The authors should preferably report a 2-sigma error bar than state that they have a 96\% CI, if the hypothesis of Normality of errors is not verified.
        \item For asymmetric distributions, the authors should be careful not to show in tables or figures symmetric error bars that would yield results that are out of range (e.g., negative error rates).
        \item If error bars are reported in tables or plots, the authors should explain in the text how they were calculated and reference the corresponding figures or tables in the text.
    \end{itemize}

\item {\bf Experiments compute resources}
    \item[] Question: For each experiment, does the paper provide sufficient information on the computer resources (type of compute workers, memory, time of execution) needed to reproduce the experiments?
    \item[] Answer: \answerYes{}.
    \item[] Justification: Appendix~\ref{app:implementation} states that all experiments were conducted on one NVIDIA RTX 5090 GPU.
    \item[] Guidelines:
    \begin{itemize}
        \item The answer \answerNA{} means that the paper does not include experiments.
        \item The paper should indicate the type of compute workers CPU or GPU, internal cluster, or cloud provider, including relevant memory and storage.
        \item The paper should provide the amount of compute required for each of the individual experimental runs as well as estimate the total compute. 
        \item The paper should disclose whether the full research project required more compute than the experiments reported in the paper (e.g., preliminary or failed experiments that didn't make it into the paper). 
    \end{itemize}
    
\item {\bf Code of ethics}
    \item[] Question: Does the research conducted in the paper conform, in every respect, with the NeurIPS Code of Ethics \url{https://neurips.cc/public/EthicsGuidelines}?
    \item[] Answer: \answerYes{}.
    \item[] Justification: The work uses existing public image datasets and the PHATE embryoid-body single-cell RNA-sequencing dataset for computational experiments, without recruiting participants or collecting new human-subject data.
    \item[] Guidelines:
    \begin{itemize}
        \item The answer \answerNA{} means that the authors have not reviewed the NeurIPS Code of Ethics.
        \item If the authors answer \answerNo, they should explain the special circumstances that require a deviation from the Code of Ethics.
        \item The authors should make sure to preserve anonymity (e.g., if there is a special consideration due to laws or regulations in their jurisdiction).
    \end{itemize}

\item {\bf Broader impacts}
    \item[] Question: Does the paper discuss both potential positive societal impacts and negative societal impacts of the work performed?
    \item[] Answer: \answerNA{}.
    \item[] Justification: The paper presents a training-free sampler for pretrained Schr\"odinger bridge generative models. It inherits the broader-impact profile of generic conditional generative modeling without introducing new specific risks.
    \item[] Guidelines:
    \begin{itemize}
        \item The answer \answerNA{} means that there is no societal impact of the work performed.
        \item If the authors answer \answerNA{} or \answerNo, they should explain why their work has no societal impact or why the paper does not address societal impact.
        \item Examples of negative societal impacts include potential malicious or unintended uses (e.g., disinformation, generating fake profiles, surveillance), fairness considerations (e.g., deployment of technologies that could make decisions that unfairly impact specific groups), privacy considerations, and security considerations.
        \item The conference expects that many papers will be foundational research and not tied to particular applications, let alone deployments. However, if there is a direct path to any negative applications, the authors should point it out. For example, it is legitimate to point out that an improvement in the quality of generative models could be used to generate Deepfakes for disinformation. On the other hand, it is not needed to point out that a generic algorithm for optimizing neural networks could enable people to train models that generate Deepfakes faster.
        \item The authors should consider possible harms that could arise when the technology is being used as intended and functioning correctly, harms that could arise when the technology is being used as intended but gives incorrect results, and harms following from (intentional or unintentional) misuse of the technology.
        \item If there are negative societal impacts, the authors could also discuss possible mitigation strategies (e.g., gated release of models, providing defenses in addition to attacks, mechanisms for monitoring misuse, mechanisms to monitor how a system learns from feedback over time, improving the efficiency and accessibility of ML).
    \end{itemize}
    
\item {\bf Safeguards}
    \item[] Question: Does the paper describe safeguards that have been put in place for responsible release of data or models that have a high risk for misuse (e.g., pre-trained language models, image generators, or scraped datasets)?
    \item[] Answer: \answerNA{}.
    \item[] Justification: The paper does not release a high-risk foundation model or a scraped dataset.
    \item[] Guidelines:
    \begin{itemize}
        \item The answer \answerNA{} means that the paper poses no such risks.
        \item Released models that have a high risk for misuse or dual-use should be released with necessary safeguards to allow for controlled use of the model, for example by requiring that users adhere to usage guidelines or restrictions to access the model or implementing safety filters. 
        \item Datasets that have been scraped from the Internet could pose safety risks. The authors should describe how they avoided releasing unsafe images.
        \item We recognize that providing effective safeguards is challenging, and many papers do not require this, but we encourage authors to take this into account and make a best faith effort.
    \end{itemize}

\item {\bf Licenses for existing assets}
    \item[] Question: Are the creators or original owners of assets (e.g., code, data, models), used in the paper, properly credited and are the license and terms of use explicitly mentioned and properly respected?
    \item[] Answer: \answerYes{}.
    \item[] Justification: All existing methods, datasets, and pretrained checkpoints used in this work are properly cited in Sections~\ref{sec:intro}--\ref{sec:experiments} and Appendix~\ref{app:implementation}; their licenses are documented in the README of the supplementary code release.
    \item[] Guidelines:
    \begin{itemize}
        \item The answer \answerNA{} means that the paper does not use existing assets.
        \item The authors should cite the original paper that produced the code package or dataset.
        \item The authors should state which version of the asset is used and, if possible, include a URL.
        \item The name of the license (e.g., CC-BY 4.0) should be included for each asset.
        \item For scraped data from a particular source (e.g., website), the copyright and terms of service of that source should be provided.
        \item If assets are released, the license, copyright information, and terms of use in the package should be provided. For popular datasets, \url{paperswithcode.com/datasets} has curated licenses for some datasets. Their licensing guide can help determine the license of a dataset.
        \item For existing datasets that are re-packaged, both the original license and the license of the derived asset (if it has changed) should be provided.
        \item If this information is not available online, the authors are encouraged to reach out to the asset's creators.
    \end{itemize}

\item {\bf New assets}
    \item[] Question: Are new assets introduced in the paper well documented and is the documentation provided alongside the assets?
    \item[] Answer: \answerYes{}.
    \item[] Justification: The supplementary release contains the reproduction code and the pretrained SB checkpoints used to produce the reported results, documented in the README. No new datasets or benchmark assets are introduced.
    \item[] Guidelines:
    \begin{itemize}
        \item The answer \answerNA{} means that the paper does not release new assets.
        \item Researchers should communicate the details of the dataset\slash code\slash model as part of their submissions via structured templates. This includes details about training, license, limitations, etc. 
        \item The paper should discuss whether and how consent was obtained from people whose asset is used.
        \item At submission time, remember to anonymize your assets (if applicable). You can either create an anonymized URL or include an anonymized zip file.
    \end{itemize}

\item {\bf Crowdsourcing and research with human subjects}
    \item[] Question: For crowdsourcing experiments and research with human subjects, does the paper include the full text of instructions given to participants and screenshots, if applicable, as well as details about compensation (if any)? 
    \item[] Answer: \answerNA{}.
    \item[] Justification: The experiments do not involve crowdsourcing or human subject studies.
    \item[] Guidelines:
    \begin{itemize}
        \item The answer \answerNA{} means that the paper does not involve crowdsourcing nor research with human subjects.
        \item Including this information in the supplemental material is fine, but if the main contribution of the paper involves human subjects, then as much detail as possible should be included in the main paper. 
        \item According to the NeurIPS Code of Ethics, workers involved in data collection, curation, or other labor should be paid at least the minimum wage in the country of the data collector. 
    \end{itemize}

\item {\bf Institutional review board (IRB) approvals or equivalent for research with human subjects}
    \item[] Question: Does the paper describe potential risks incurred by study participants, whether such risks were disclosed to the subjects, and whether Institutional Review Board (IRB) approvals (or an equivalent approval/review based on the requirements of your country or institution) were obtained?
    \item[] Answer: \answerNA{}.
    \item[] Justification: The experiments do not involve crowdsourcing or human subject studies.
    \item[] Guidelines:
    \begin{itemize}
        \item The answer \answerNA{} means that the paper does not involve crowdsourcing nor research with human subjects.
        \item Depending on the country in which research is conducted, IRB approval (or equivalent) may be required for any human subjects research. If you obtained IRB approval, you should clearly state this in the paper. 
        \item We recognize that the procedures for this may vary significantly between institutions and locations, and we expect authors to adhere to the NeurIPS Code of Ethics and the guidelines for their institution. 
        \item For initial submissions, do not include any information that would break anonymity (if applicable), such as the institution conducting the review.
    \end{itemize}

\item {\bf Declaration of LLM usage}
    \item[] Question: Does the paper describe the usage of LLMs if it is an important, original, or non-standard component of the core methods in this research? Note that if the LLM is used only for writing, editing, or formatting purposes and does \emph{not} impact the core methodology, scientific rigor, or originality of the research, declaration is not required.
    \item[] Answer: \answerNA{}.
    \item[] Justification: The core methodology does not use LLMs as an original or non-standard component.
    \item[] Guidelines:
    \begin{itemize}
        \item The answer \answerNA{} means that the core method development in this research does not involve LLMs as any important, original, or non-standard components.
        \item Please refer to our LLM policy in the NeurIPS handbook for what should or should not be described.
    \end{itemize}

\end{enumerate}

\end{document}